\newtheorem{theorem}{Theorem}[section]
\newtheorem{lemma}[theorem]{Lemma}
\definecolor{cvprblue}{rgb}{0.21,0.49,0.74}
\title{Joint Spatial-Temporal Calibration for Camera and Global Pose Sensor}
\def\thanks#1{\protected@xdef\@thanks{\@thanks
        \protect\footnotetext{#1}}}
\author{Junlin Song\thanks{This research was supported by the European Union’s Horizon 2020 project SESAME (grant agreement No 101017258). The authors are with the Space Robotics (SpaceR) Research Group, Int. Centre for Security, Reliability and Trust (SnT), University of Luxembourg, Luxembourg.} \and Antoine Richard \and Miguel Olivares-Mendez}
\begin{document}
\maketitle
\begin{abstract}
%Motion capture systems have been widely used to obtain the localization accuracy of robots.
In robotics, motion capture systems have been widely used to measure the accuracy of localization algorithms. 
Moreover, this infrastructure can also be used for other computer vision tasks, such as the evaluation of Visual (-Inertial) SLAM dynamic initialization, multi-object tracking, or automatic annotation.
Yet, to work optimally, these functionalities require having accurate and reliable spatial-temporal calibration parameters between the \textbf{camera} and the \textbf{global pose sensor}. % (the coordinate frame of the makers tracked by motion capture system). 
% These two sensors are rigidly attached on a moving platform. 
%This paper provides two novel solutions to estimate these calibration parameters. 
In this study, we provide two novel solutions to estimate these calibration parameters. 
% To our knowledge, we are the first to simultaneously calibrate these spatial-temporal parameters with raw monocular camera pixel measurements and global pose measurements. 
Firstly, we design an offline target-based method with high accuracy and consistency. Spatial-temporal parameters, camera intrinsic, and trajectory are optimized simultaneously.
Then, we propose an online target-less method, eliminating the need for a calibration target and enabling the estimation of time-varying spatial-temporal parameters.
Additionally, we perform detailed observability analysis for the target-less method. 
% and identify several commonly-seen degenerated motion profiles that penalize the estimator.
Our theoretical findings regarding observability are validated by simulation experiments and provide explainable guidelines for calibration.
Finally, the accuracy and consistency of two proposed methods are evaluated with hand-held real-world datasets where traditional hand-eye calibration method do not work.
\end{abstract}    
\section{Introduction}
\label{sec:intro}
Nowadays, motion capture systems are widely used to perform 6DoF pose tracking thanks to their high accuracy (sub-millimeter).
%Being able to provide the high-precision localization, motion capture systems are widely used in 6DoF pose tracking.
In odometry and SLAM research, most datasets leverage these to provide the ground truth pose \cite{burri2016euroc, schubert2018tum, delmerico2019we}. The collection platform from \cite{ schubert2018tum} shown in \cref{Frames in TUM-VI} displays some passive markers typically associated with motion capture systems. Aside from its application to localization methods, the potential of motion capture systems in the field of computer vision has not been fully exploited. The key is the spatial-temporal calibration parameters of the camera and the global pose sensor  (see \cref{spatial-temporal}). 

\begin{figure}[htbp]
  \centering
    \begin{subfigure}{0.23\textwidth}
        \centering
        \includegraphics[width=\textwidth, height=0.8\textwidth]{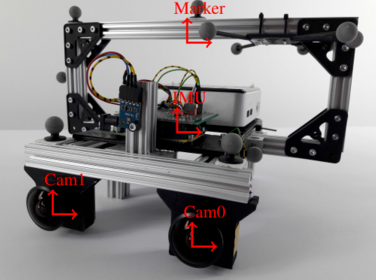}
        \caption{}
        \label{Frames in TUM-VI}
    \end{subfigure}
    \hfill
    \begin{subfigure}{0.23\textwidth}
        \centering
        \includegraphics[width=\textwidth, height=0.8\textwidth]{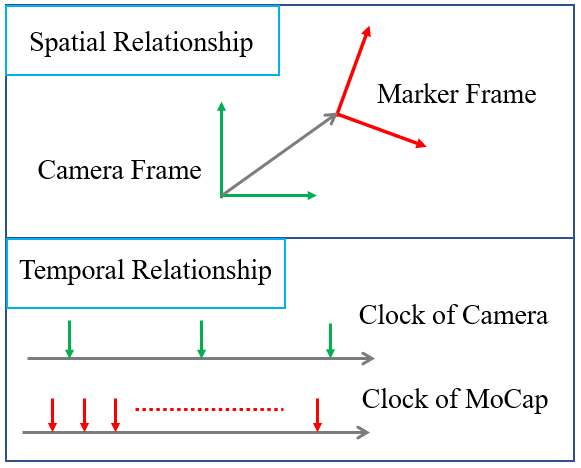}
        \caption{}
        \label{spatial-temporal}
    \end{subfigure}
  \caption{(a) Photo of the sensor setup, taken from \cite{schubert2018tum}. (b) The spatial-temporal relationship between the camera measurements and the global pose measurements.}
\end{figure}

For instance, in \cref{Frames for target-less method}, we assume a target tracking or automatic labeling task, performed with the motion capture system.
The camera $\left\{ C \right\}$ is rigidly linked with the marker frame $\left\{ M \right\}$ tracked by the motion capture system.
The target is regarded as a point $f$.
The motion capture system provides ${}^G{p_f}$ and $\left\{ {\begin{array}{*{20}{c}}
{{}_M^Gq}&{{}^G{p_M}}
\end{array}} \right\}$.
Given the spatial-temporal calibration parameters linked $\left\{ M \right\}$ and $\left\{ C \right\}$, the image coordinates of $f$ can be obtained automatically via rigid body link ($f \to G \to M \to C$).

The above example illustrates the benefits of having a spatial-temporal calibration between a camera and a global pose sensor. 
In the literature, the methods to solve the spatial-temporal calibration are divided into two categories: target-based methods and target-less methods.
The target-based methods are more accurate than the target-less methods, benefiting from the prior knowledge of the calibration target.
Target-based methods are widely used in multi-sensor calibration tasks \cite{furgale2013unified, rehder2016extending, rehder2016general}. 
Target-based spatial-temporal hand-eye calibration was first presented in \cite{furrer2018evaluation}.
The spatial-temporal parameters are calibrated by aligning the motion capture trajectory with the camera trajectory, which is obtained by the Perspective-n-Point (PnP) algorithm, with the calibration target.
The camera's intrinsic parameters are assumed to be fixed.
Therefore, the accuracy of \cite{furrer2018evaluation} is limited by the PnP algorithm, employed on every single image.
After the PnP process, all raw pixels measurements are discarded.
The isolation processing of the motion capture sequence and camera sequence cannot uncover the inherent correlation between raw pixel measurements and motion capture measurements. 
Unlike our target-based calibration algorithm which fully utilizes all the raw sensor data to optimize the spatial-temporal parameters, camera intrinsic and trajectory simultaneously.

However, these methods are only suitable for offline non-real-time calibration and require significant amounts of manual effort. % especially in the task we present.
Markers attached to the camera may be removed during experiments, therefore changing the spatial calibration parameter.
Moreover, the temporal calibration parameter would also change due to different clocks, transmission delays, data jam, jitter, and skew \cite{qiu2020real}. Therefore, online target-less calibration method is also worth exploiting, saving human effort and improving the ease of application.

In recent years, online target-less calibration has attracted significant attention in visual-inertial navigation systems (VINS) \cite{li2014online, qin2018online, yang2020online}.
Among them, the EKF-based methods are the most popular thanks to their computational efficiency. % with regard to visual information processing.
\cite{li2014online} pointed out that given sufficient motion excitation, the spatial-temporal calibration parameters of VINS are observable.
However, under specific motion profiles, some degrees of freedom of the calibration parameters would be unobservable \cite{yang2019degenerate}.
Identifying potential motion degradation, and avoiding such motion, is crucial to reliably apply these types of algorithms.

%In this work, we focus on performing the spatial-temporal calibration between the camera and the global pose sensor.
The contributions of this work are summarized as:

\begin{itemize}
    \item To our knowledge, this is the first work to simultaneously calibrate spatial-temporal parameters of the camera and the global pose sensor, with raw monocular camera pixel measurements and global pose measurements. 
    
    \item We propose two novel approaches to estimate the spatial-temporal parameters. Both target-based and target-less methods are considered.
    
    \item We provide detailed observability analysis for the proposed target-less calibration method and identify the degenerated motions that may occur in practice, causing partial calibration parameters unobservable.
    
    \item We verify the degenerate motions in simulation and evaluate the accuracy and consistency of two proposed algorithms with hand-held real-world datasets.

    \item We demonstrate the applicability of online calibration time-varying spatial-temporal parameters for the target-less method.
     
\end{itemize}
\section{Notation} \label{notation}

\begin{figure}[htbp]
  \centering
    \begin{subfigure}{0.23\textwidth}
        \centering
        \includegraphics[width=\textwidth, height=0.8\textwidth]{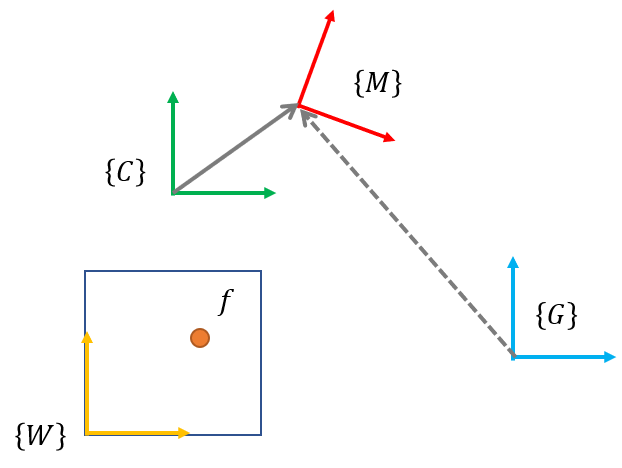}
        \caption{}
        \label{Frames for target-based method}
    \end{subfigure}
    \hfill
    \begin{subfigure}{0.23\textwidth}
        \centering
          \includegraphics[width=\textwidth, height=0.8\textwidth]{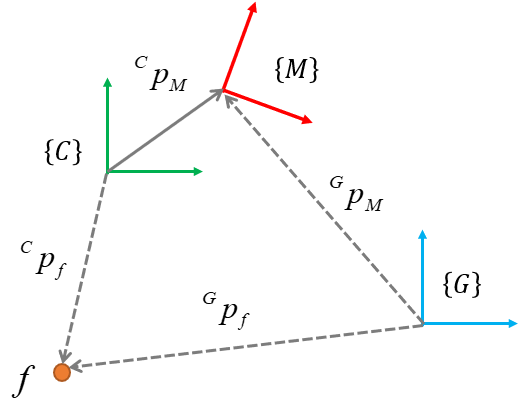}
          \caption{}
          \label{Frames for target-less method}
    \end{subfigure}
  \caption{(a) Coordinate frames for the target-based method. (b) Coordinate frames for the target-less method.}
  \label{Frames}
\end{figure}

As shown in \cref{Frames}, $\{ G\} $ represents the global reference frame of the motion capture system. $\{ M\} $ and $\{ C\} $ represent the marker frame and the camera frame respectively. In this paper, \textbf{“marker”} is an equivalent term of \textbf{“global pose sensor”}, as the 6DoF movement of frame $\{ M\} $ could be tracked by the motion capture system. The 6DoF rigid body transformation between $\{ M\} $ and $\{ C\}$, ${{}_M^CT}$, is the spatial calibration parameter. In our formulation, the camera time clock is treated as the time reference in the estimators. The time offset between the marker clock and the camera clock is the temporal calibration parameter ${t_d}$. If the timestamp at the camera clock is ${t_C}$, then the corresponding timestamp at the marker clock is:
\begin{equation}
    {t_M} = {t_C} + {t_d}
    \label{eq:clock}
\end{equation}

We use ${}^G\left(  \bullet  \right)$  to represent a physical quantity in the frame $\{ G\} $. The position of a point $M$ in the frame $\{ G\}$ is expressed as ${}^G{p_M}$. The velocity of a point $M$ in the frame $\{ G\}$ is expressed as ${}^G{v_M}$. The local angular velocity of $\{ M\} $ is denoted as $\omega $. A Unit quaternion is employed to represent the rotation of a rigid body \cite{trawny2005indirect}. ${}_G^Mq$ represents the orientation of the frame $\{ M\}$ with respect to the frame $\{ G\} $, and its corresponding rotation matrix is ${}_G^MR$. ${\left[  \bullet  \right]_ \times }$ is denoted as the skew symmetric matrix corresponding to a three-dimensional vector. The transpose of a matrix is ${\left[  \bullet  \right]^T}$. 

\section{Target-based Calibration}

A target-based calibration method which adopts offline full-batch nonlinear least squares optimization is designed to provide high accurate and consistent solutions for calibration parameters.

We use a grid of AprilTag \cite{olson2011apriltag} as the calibration target, as shown in \cref{fig:1 iter}. The coordinate frames involved in target-based method are depicted in \cref{Frames for target-based method}. Compared with \cref{Frames for target-less method}, additional frame $\{ W\} $ is built and fixed on the calibration target. 

Suppose that the timestamp of the $i$th image is ${t_i}$. The image coordinate of the $j$th AprilTag corner ${f_j}$ detected in the $i$th image is ${u_{ij}}$. Its associated 3D coordinates ${}^W{p_{{f_j}}}$ in $\{ W\} $ is known. The optimization variables are defined as:
\begin{equation}
    \chi  = \left\{ {\begin{array}{*{20}{c}}
    {{}_{{C_1}}^WT}& \cdots &{{}_{{C_N}}^WT}&{{}_W^GT}&{{}_M^CT}&{{t_d}}&{\varsigma}
    \end{array}} \right\}
    \label{eq:opt_var}
\end{equation}

Where $N$ is the image numbers. $\chi$ includes the all camera poses ${}_{{C_i}}^WT,{\rm{ }}i = 1 \cdots N$, the rigid body transformation between $\{ W\} $ and $\{ G\} $, the spatial-temporal calibration parameters $\left\{ {\begin{array}{*{20}{c}}
{{}_M^CT}&{{t_d}} \end{array}} \right\}$, and the vector of camera intrinsic parameters $\varsigma$. 
By integrating all raw image pixel measurements and global pose measurements, we formulate the least squares optimization as:
\begin{equation}
    \begin{array}{l}
    \chi  = \arg \min \left\{ {\sum\limits_{i = 1}^N {\sum\limits_{j = 1}^K {\rho \left( {{r_{ij}}} \right)}  + \sum\limits_{i = 1}^N {\rho \left( {{r_{gi}}} \right)} } } \right\}\\
    {r_{ij}} = \pi \left( {{}_W^{{C_i}}T{}^W{p_{{f_j}}}}, \varsigma \right) - {u_{ij}}\\
    {r_{gi}} = {Log} \left( {{}_G^MT\left( {{t_i} + {t_d}} \right){}_W^GT{}_{{C_i}}^WT{}_M^CT} \right)
    \end{array}
    \label{eq:min}
\end{equation}

Where $K$ is the corner numbers for each image. $\rho \left(  \bullet  \right)$ is a robust kernel function \cite{chebrolu2021adaptive}. $\pi \left(  \bullet, \bullet  \right)$ is a fixed camera projection function \cite{usenko2018double, heng2013camodocal}. $Log \left(  \bullet  \right)$ maps the element on a Lie group to the tangent space vector \cite{sola2018micro}.

${}_G^MT\left( {{t_i} + {t_d}} \right)$ is the interpolated global pose measurement. To calculate ${}_G^MT\left( {{t_i} + {t_d}} \right)$, we find two closet timestamps over all global pose measurements, $t_a$ and $t_b$, which subject to ${t_a} \le {t_i} + {t_d} < {t_b}$. Two corresponding pose measurements are ${}_G^{{M_a}}T$ and ${}_G^{{M_b}}T$ respectively. Using linear interpolation with two bounding poses, the synthetic measurement at ${t_i} + {t_d}$ is expressed as:
\begin{equation}
    \begin{array}{l}
    {}_G^MT\left( {{t_i} + {t_d}} \right) = Exp\left( {\lambda Log\left( {{}_G^{{M_b}}T{}_G^{{M_a}}{T^{ - 1}}} \right)} \right){}_G^{{M_a}}T\\
    \lambda  = {{\left( {{t_i} + {t_d} - {t_a}} \right)} \mathord{\left/
     {\vphantom {{\left( {{t_i} + {t_d} - {t_a}} \right)} {\left( {{t_b} - {t_a}} \right)}}} \right.
     \kern-\nulldelimiterspace} {\left( {{t_b} - {t_a}} \right)}}
    \end{array}
\end{equation}

$Exp\left(  \bullet  \right)$ is the inverse operation of $Log \left(  \bullet  \right)$ \cite{sola2018micro}.

Jacobians of residuals in \cref{eq:min} with respect to the optimization variables $\chi$ are calculated according to the chain rule and provided in \cref{sec:Jacobians} of supplementary material.
The Levenberg-Marquardt algorithm is adpot to minimize \cref{eq:min} and update the optimal estimation iteratively.

 Differentiate from \cite{furrer2018evaluation}, the proposed target-based method is able to optimize and refine the spatial-temporal calibration parameters, the transformation between $\{ W\} $ and $\{ G\} $, the camera intrinsic $\varsigma$ and trajectory ${}_{{C_i}}^WT,{\rm{ }}i = 1 \cdots N$ simultaneously, without information loss.

\section{Target-less Calibration}

To alleviate the need for calibration target and enable time-varying parameters calibration during the operation, we provide an alternative online EKF-based target-less calibration method. Coordinate frames are shown in \cref{Frames for target-less method}.

\subsection{State Vector}

The EKF state vector inspired by MSCKF \cite{mourikis2007multi} includes the marker state, the spatial-temporal calibration parameters, the camera intrinsic parameters, augmented $N$ marker states and up to $L$ augmented features:
\begin{equation}
    % \scalebox{0.9}{$
    \begin{array}{l}
    x = {\left[ {\begin{array}{*{20}{c}}
    {x_M^T}&{x_{calib}^T}&{x_c^T}&{x_f^T}
    \end{array}} \right]^T}\\
    {x_M} = {\left[ {\begin{array}{*{20}{c}}
    {{}_G^M{q^T}}&{{}^Gp_M^T}&{{\omega ^T}}&{{}^Gv_M^T}
    \end{array}} \right]^T}\\
    {x_{calib}} = {\left[ {\begin{array}{*{20}{c}}
    {{}_M^C{q^T}}&{{}^Cp_M^T}&{{t_d}}&{\varsigma}
    \end{array}} \right]^T}\\
    {x_c}{\rm{ = }}{\left[ {\begin{array}{*{20}{c}}
    {x_{{c_1}}^T}& \cdots &{x_{{c_N}}^T}
    \end{array}} \right]^T}{\rm{  }}
    {\quad}{\quad}{x_{{c_i}}} = {\left[ {\begin{array}{*{20}{c}}
    {{}_G^{{M_i}}{q^T}}&{{}^Gp_{{M_i}}^T}
    \end{array}} \right]^T}\\
    {x_f}{\rm{ = }}{\left[ {\begin{array}{*{20}{c}}
    {{}^Gp_{{f_1}}^T}& \cdots &{{}^Gp_{{f_L}}^T}
    \end{array}} \right]^T}{\rm{ }}
    \end{array}
    % $}
\end{equation}

Where ${x_M}$ is the current marker state at the camera clock. Calibration parameter ${x_{calib}}$ includes the 6DoF transformation $\left\{ {\begin{array}{*{20}{c}}{{}_M^Cq}&{{}^C{p_M}}\end{array}} \right\}$, the time offset ${t_d}$ and the camera intrinsic parameters $\varsigma$. ${x_c}$ is the augmented marker states, which is obtained by cloning the first two physical quantities of ${x_M}$ at different image times. $N$ is the sliding window size, a fixed parameter. The pose clones in the sliding window are utilized to triangulate environmental feature points. ${}^G{p_{{f_j}}}$ is an augmented feature, or termed as a SLAM feature \cite{li2013optimization, li2014visual, geneva2020openvins}.

Angular and linear velocity ($\omega$ and ${}^G{v_M}$) are included to predict the motion because the measurements provided by motion capture system may be intermittent. Moreover, they are needed to estimate time offset (see \cref{eq:H_aug}).

\subsection{Constant Velocity Propagation}

Referring to previous study on trajectory estimation \cite{dong2018sparse, schubert2018direct}, a constant-velocity motion prior is applied. ${x_M}$ is propagated forward based on the constant velocity motion model. The kinematic model can be described as:
\begin{equation}
    \begin{array}{l}
    {}_G^M\dot q = \frac{1}{2}\Omega \left( \omega  \right){}_G^Mq,{\quad}{}^G{{\dot p}_M} = {}^G{v_M}\\
    \dot \omega  = {n_\omega },{\quad}{}^G{{\dot v}_M} = {n_v}
    \end{array}
    \label{eq:kinematic model}
\end{equation}

$\Omega \left( \omega  \right) = \left[ {\begin{array}{*{20}{c}}
{ - {{\left[ \omega  \right]}_ \times }}&\omega \\
{ - {\omega ^T}}&0
\end{array}} \right]$. ${n_{\left[ \bullet \right]}}$  represents the zero mean Gaussian noise of $\left[  \bullet \right]$, which is a hyperparameter. These hyperparameters can be determinated in advance using existing approaches \cite{dong2018sparse, barfoot2014batch}.

By linearizing \cref{eq:kinematic model} at the current state estimation, the state transition matrix from time $t_0$ to time $t_{k}$ can be analytically calculated as follows:
\begin{equation}
    \begin{array}{l}
    {\Phi _M}\left( {{t_{k}},{t_0}} \right) = \left[ {\begin{array}{*{20}{c}}
    A&{{0_3}}&B&{{0_3}}\\
    {{0_3}}&{{I_3}}&{{0_3}}&{{I_3}\Delta t}\\
    {{0_3}}&{{0_3}}&{{I_3}}&{{0_3}}\\
    {{0_3}}&{{0_3}}&{{0_3}}&{{I_3}}
    \end{array}} \right]\\
    % {G_k} = \left[ {\begin{array}{*{20}{c}}
    % B&{{0_3}}\\
    % {{0_3}}&{{I_3}\Delta t}\\
    % {{I_3}}&{{0_3}}\\
    % {{0_3}}&{{I_3}}
    % \end{array}} \right]\\
    A = {}_G^{{M_{k}}} R{}_G^{{M_0}}{{ R}^T}\\
    B = {}_G^{{M_{k}}} R{}_G^{{M_0}}{{ R}^T}{J_r}\left( { - \omega \Delta t} \right)\Delta t
    \end{array}
    \label{eq:transition}
\end{equation}

Where ${J_r}\left( \bullet \right)$ is the right Jacobian of SO(3) \cite{barfoot2017state}. 

\subsection{Visual Measurement Update}

For a new coming image with the timestamp $t$, we clone the latest marker pose and augment it to the state vector $x$ to track the camera pose. According to \cref{eq:clock}, the corresponding marker timestamp is $t + {t_d}$. The new cloned marker pose is:
\begin{equation}
    {x_{{c_{new}}}} = \left[ {\begin{array}{*{20}{c}}
    {{}_G^Mq\left( {t + {t_d}} \right)}\\
    {{}^G{p_M}\left( {t + {t_d}} \right)}
    \end{array}} \right]
\end{equation}

% The subset of state variables related to ${x_{{c_{new}}}}$ is noted as:
% \begin{equation}
%     {x_{{s_1}}} = {\left[ {\begin{array}{*{20}{c}}
%     {{}_G^M{q^T}}&{{}^Gp_M^T}&{{t_d}}
%     \end{array}} \right]^T}
% \end{equation}

The state augmentation Jacobian with respect to ${\left[ {\begin{array}{*{20}{c}}
     {{}_G^M{q^T}}&{{}^Gp_M^T}&{{t_d}}
     \end{array}} \right]^T}$ is calculated as:
\begin{equation}
    {H_{aug}} = \left[ {\begin{array}{*{20}{c}}
    {{I_3}}&{{0_3}}&\omega \\
    {{0_3}}&{{I_3}}&{{}^G{v_M}}
    \end{array}} \right]
    \label{eq:H_aug}
\end{equation}

After the state augmentation is completed, we check the sliding window size and marginalize the oldest clone state if the window size exceeds $N$. The carefully selected feature points are used to update the poses over the sliding window and the position of the feature points. The feature measurement model can be written as:
\begin{equation}
    \begin{array}{l}
    {z_f} = \pi \left( {{}^C{p_f}} , \varsigma \right)\\
    {}^C{p_f} = {}_M^CR{}_G^MR\left( {{}^G{p_f} - {}^G{p_M}} \right) + {}^C{p_M}
    \end{array}
    \label{eq:feature_model}
\end{equation}

The subset of state variables related to ${z_f}$ is noted as\footnote{The camera intrinsic $\varsigma$ is omitted here because it does not affect the subsequent observability analysis in \cref{Observability Analysis}.}:
\begin{equation}
    {x_{{s}}} = {\left[ {\begin{array}{*{20}{c}}
    {{}_G^M{q^T}}&{{}^Gp_M^T}&{{}_M^C{q^T}}&{{}^Cp_M^T}&{{}^Gp_f^T}
    \end{array}} \right]^T}
\end{equation}

The feature measurement Jacobian is calculated as:
\begin{equation}
    \begin{array}{l}
    {H_f} = \frac{{\partial {z_f}}}{{\partial {}^C{p_f}}}{}_M^CR{}_G^MR\left[ {\begin{array}{*{20}{c}}
    {{J_1}}&{ - {I_3}}&{{J_2}}&{{}_M^GR{}_C^MR}&{{I_3}}
    \end{array}} \right]\\
    {J_1} = {\left[ {\left( {{}^G{p_f} - {}^G{p_M}} \right)} \right]_ \times }{}_M^GR\\
    {J_2} = {\left[ {\left( {{}^G{p_f} - {}^G{p_M}} \right)} \right]_ \times }{}_M^GR{}_C^MR
    \end{array}
    \label{eq:H_f}
\end{equation}

More details about feature detection, tracking, outlier rejection, triangulation, sliding window update scheme and covariance management can be found in \cite{geneva2020openvins}.

\subsection{Global Pose Measurement Update} \label{Mocap_Update}

The timestamp of the global pose measurements $t$, provided at the marker clock, are shifted by ${t_d}$, $t - {t_d}$. The corrected global pose measurement is used to update ${x_M}$. The global pose measurement model can be written as:
\begin{equation}
    {z_g} = \left[ {\begin{array}{*{20}{c}}
    {{}_G^Mq}\\
    {{}^G{p_M}}
    \end{array}} \right]
\end{equation}

% The subset of state variables related to ${z_g}$ is noted as:
% \begin{equation}
%     {x_{{s_3}}} = {\left[ {\begin{array}{*{20}{c}}
%     {{}_G^M{q^T}}&{{}^Gp_M^T}
%     \end{array}} \right]^T}
% \end{equation}

The global pose measurement Jacobian with respect to ${\left[ {\begin{array}{*{20}{c}}
    {{}_G^M{q^T}}&{{}^Gp_M^T}
    \end{array}} \right]^T}$ is calculated as:
\begin{equation}
    {H_g} = \left[ {\begin{array}{*{20}{c}}
    {{I_3}}&{{0_3}}\\
    {{0_3}}&{{I_3}}
    \end{array}} \right]
    \label{eq:H_g}
\end{equation}

\section{Observability Analysis} \label{Observability Analysis}

System observability plays an important role in state estimation. To study the potential calibration failures, we perform observability analysis for the linearized system \cite{chen1990local} derived in the target-less calibration. To the best of our knowledge, this is the first time that a paper studies the observability of the spatial-temporal parameters between the camera and the marker. 

Since the state vector couples both motion variables and calibration parameters together by covariance matrix. It is expected that the success of calibration depends on motion profiles. Identifying the potential degenerate motion profiles that adversely affect the calibration accuracy can guide the calibration process in practice.

To concise the presentation, we do not consider clone states in the state vector. $\omega$ and ${}^G{v_M}$ are also neglected as their observablity property is consistent with the marker pose. And only one SLAM feature is kept. The results can be extended to general cases \cite{li2013high, hesch2013consistency}. The system state vector becomes:
\begin{equation}
    x = {\left[ {\begin{array}{*{20}{c}}
    {{}_G^M{q^T}}&{{}^Gp_M^T}&{{}_M^C{q^T}}&{{}^Cp_M^T}&{{t_d}}&{{}^Gp_f^T}
    \end{array}} \right]^T}
\end{equation}

The state transition matrix becomes:
\begin{equation}
    \Phi \left( {{t_k},{t_0}} \right) = \left[ {\begin{array}{*{20}{c}}
    A&{}\\
    {}&{{I_{13}}}
    \end{array}} \right]
\end{equation}

$A$ is defined in \cref{eq:transition}.

${H_{aug}}$ in \cref{eq:H_aug}, ${H_f}$ in \cref{eq:H_f} and ${H_g}$ in \cref{eq:H_g} are stacked to construct the general Jacobian of the state:
\begin{equation}
    {H_k} = \left[ {\begin{array}{*{20}{c}}
    {{I_3}}&{{0_3}}&{{0_3}}&{{0_3}}&\omega &{{0_3}}\\
    {{0_3}}&{{I_3}}&{{0_3}}&{{0_3}}&{{}^G{v_M}}&{{0_3}}\\
    {{J_1}}&{ - {I_3}}&{{J_2}}&{{}_M^GR{}_C^MR}&{{0_{3 \times 1}}}&{{I_3}}\\
    {{I_3}}&{{0_3}}&{{0_3}}&{{0_3}}&{{0_{3 \times 1}}}&{{0_3}}\\
    {{0_3}}&{{I_3}}&{{0_3}}&{{0_3}}&{{0_{3 \times 1}}}&{{0_3}}
    \end{array}} \right]
\end{equation}

The common factor $\frac{{\partial {z_f}}}{{\partial {}^C{p_f}}}{}_M^CR{}_G^MR$ in \cref{eq:H_f} is ignored here because it does not affect the observability analysis. Now the observability matrix would be constructed as \cite{chen1990local}:
\begin{equation}
    \begin{array}{c}
    O = {\left[ {\begin{array}{*{20}{c}}
     \cdots &{O_k^T}& \cdots 
    \end{array}} \right]^T}\\
    {O_k} = {H_k}\Phi \left( {{t_k},{t_0}} \right)\\
     = \left[ {\begin{array}{*{20}{c}}
    A&{{0_3}}&{{0_3}}&{{0_3}}&\omega &{{0_3}}\\
    {{0_3}}&{{I_3}}&{{0_3}}&{{0_3}}&{{}^G{v_M}}&{{0_3}}\\
    {{J_1}A}&{ - {I_3}}&{{J_2}}&{{}_M^GR{}_C^MR}&{{0_{3 \times 1}}}&{{I_3}}\\
    A&{{0_3}}&{{0_3}}&{{0_3}}&{{0_{3 \times 1}}}&{{0_3}}\\
    {{0_3}}&{{I_3}}&{{0_3}}&{{0_3}}&{{0_{3 \times 1}}}&{{0_3}}
    \end{array}} \right]
    \end{array}
\end{equation}

We note that for generic motions, ${O}$ is a time varying matrix, whose columns are linearly independent. At this point, we state that the spatial-temporal calibration parameters are observable with fully excited 6DoF motions.

However, under the special motion situation, the linear independent relationship is no longer maintained, resulting in some degrees of freedom of the calibration parameters becoming unobservable.

\begin{lemma}
If the frame $\{ M\} $ performs pure translation (no rotation) motion, ${}^C{p_M}$ is unobservable. The corresponding right null space of ${O}$ is:
\begin{equation}
    {N_1} = {\left[ {\begin{array}{*{20}{c}}
    {{0_{3 \times 9}}}&{{I_3}}&{{0_{3 \times 1}}}&{ - {{\left( {{}_M^GR{}_C^MR} \right)}^T}}
    \end{array}} \right]^T}
\end{equation}
\end{lemma}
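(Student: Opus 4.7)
The plan is to verify directly that $N_1$ lies in the right null space of $O$ under the pure-translation hypothesis, by computing $O_k N_1 = 0$ at every time $t_k$ and then stacking. Two simplifications follow from $\omega \equiv 0$: the rotation ${}_G^M R(t)$ is constant along the trajectory, so $A = {}_G^{M_k} R \, {}_G^{M_0} R^T = I_3$ and hence $\Phi(t_k, t_0) = I_{16}$; and the body-frame angular velocity column in the first block row of $H_k$ (the augmentation Jacobian) vanishes. With these in hand I would substitute into the expression for $O_k = H_k \Phi$ and proceed by a very sparse block multiplication.

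The observation that makes the computation short is that $N_1$ has nonzero blocks only in the slots for ${}^C p_M$ and ${}^G p_f$, so only the corresponding block columns of $O_k$ participate. The block rows inherited from $H_{aug}$ and $H_g$ (rows one, two, four, and five) are zero in both of those columns, and they annihilate $N_1$ trivially. All of the content is therefore in the third block row (from $H_f$), where the two contributions are $({}_M^G R \, {}_C^M R)\cdot I_3$ from the ${}^C p_M$ column and $I_3 \cdot (-{}_M^G R \, {}_C^M R)$ from the ${}^G p_f$ column; they cancel exactly, and the $t_d$ column plays no role because the corresponding entry of $N_1$ is zero.

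The step I expect to require the most care, and which is really the heart of the lemma, is time-indexing. The null vector refers to the linearization state at $t_0$, so the factor $-{}_M^G R \, {}_C^M R$ inside $N_1$ is frozen to its value at $t_0$, whereas the factor ${}_M^G R \, {}_C^M R$ appearing in the fourth block column of $O_k$ is evaluated at $t_k$. Under pure translation these agree for every $k$, making the cancellation uniform and yielding $N_1 \subseteq \ker O$; as soon as any rotation accumulates, the third block row of $O_k N_1$ would instead read $({}_M^G R(t_k) - {}_M^G R(t_0)) \, {}_C^M R \, v$, which is generically nonzero and destroys the null property. This matches the physical intuition I would include as a closing remark: a perturbation of ${}^C p_M$ can be perfectly absorbed by an opposite shift of every feature coordinate ${}^G p_f$, but only a frozen marker-to-global orientation lets a single world-frame compensating shift work simultaneously at every observation instant, since the global-pose measurements are blind to ${}^C p_M$ and ${}^G p_f$ by construction.
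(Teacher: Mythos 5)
Your proposal is correct and follows essentially the same route as the paper: a direct block-wise verification that $O_k N_1 = 0$ for every $k$, with the key observation that the cancellation in the feature-measurement row requires ${}_M^GR$ (and hence $N_1$) to be constant, which is exactly what the pure-translation hypothesis guarantees. Your explicit treatment of the time-indexing of ${}_M^GR{}_C^MR$ is just a more detailed statement of the paper's remark that $N_1$ is a constant matrix because ${}_M^GR$ is constant.
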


% \vspace{0.2cm}

\begin{proof}
The fact that ${N_1}$ is indeed the right null space of ${O}$ can be verified by multiplying ${O_k}$ with ${N_1}$. ${O_k}{N_1} = 0$ is hold for any $k$. And we note that ${N_1}$ is a constant matrix. Since there is no rotation, ${}_M^GR$ is a constant matrix. Hence, ${N_1}$ belongs to the right null space of ${O}$. ${N_1}$ indicates that the unobservable direction is ${}^C{p_M}$.
\end{proof}

\begin{lemma} \label{lemma2}
If the the frame $\{ M\}$ rotates around a constant axis $\omega_2$ during the generic translation motion, the unobservable directions depend on the projection of $\omega_2$ in the frame $\{ C\} $, and the corresponding right null space of ${O}$ is:
\begin{equation}
    {N_2} = {\left[ {\begin{array}{*{20}{c}}
    {{0_{1 \times 9}}}&{{{\left( {{}_M^CR\omega_2 } \right)}^T}}&{{0}}&{ - {{\left( {{}_M^GR\omega_2 } \right)}^T}}
    \end{array}} \right]^T}
\end{equation}
\end{lemma}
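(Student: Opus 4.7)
The plan is to show that $N_2$ is a single constant vector and that $O_k N_2 = 0$ for every $k$, so that $N_2$ lies in the right null space of the stacked observability matrix $O$. First I would argue that $N_2$ is time-invariant under the assumed motion. Because a rotation preserves its own axis, if the body spins only about $\omega_2$ then ${}_M^G R(t)\,\omega_2 = \omega_2$ for all $t$; equivalently, the axis of rotation has the same coordinates in $\{G\}$ and in $\{M\}$ throughout the motion, so ${}_M^G R\,\omega_2$ is constant. Since ${}_M^C R$ is a fixed calibration rotation, ${}_M^C R\,\omega_2$ is also constant, and all remaining entries of $N_2$ are zero and trivially constant. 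This step matters because the proof technique requires $N_2$ to be one fixed vector that lies in the kernel of every time-varying $O_k$.

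Second, I would verify $O_k N_2 = 0$ by multiplying block row by block row. Rows $1$, $2$, $4$, and $5$ of $O_k$ annihilate $N_2$ trivially, since only the columns corresponding to ${}^C p_M$ and ${}^G p_f$ carry nonzero mass in $N_2$ and those columns are zero blocks in those four rows. The only substantive check is the visual-measurement row (row $3$), where the product reduces to
\begin{equation*}
({}_M^G R\,{}_C^M R)\,({}_M^C R\,\omega_2) + I_3\,(-{}_M^G R\,\omega_2) = {}_M^G R\,\omega_2 - {}_M^G R\,\omega_2 = 0,
\end{equation*}
using ${}_C^M R\,{}_M^C R = I_3$.

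Finally, I would interpret $N_2$ as a coupled perturbation that simultaneously shifts ${}^C p_M$ along the rotation axis expressed in $\{C\}$ and compensates by translating ${}^G p_f$ along the negation of the axis in $\{G\}$; both the visual reprojection and the global pose measurement are invariant under this shift, so the direction is unobservable, while $t_d$ and the remaining states stay observable because the corresponding entries of $N_2$ are zero. The hard part will be the first step: making rigorous the claim that ${}_M^G R\,\omega_2$ is constant. This rests on the geometric fact that the axis of a pure rotation is exactly its pointwise fixed set, so when the body spins only about $\omega_2$ the image of $\omega_2$ under ${}_M^G R$ never moves; once this is in place, the block-algebraic cancellation in the second step is routine.
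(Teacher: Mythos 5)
Your proposal is correct and follows essentially the same route as the paper: verify $O_k N_2 = 0$ block by block (only the visual row needs the cancellation $\left( {}_M^GR\,{}_C^MR \right)\left( {}_M^CR\,\omega_2 \right) - {}_M^GR\,\omega_2 = 0$) and then argue that $N_2$ is a single constant vector, which the paper does by computing $\frac{d}{dt}\left( {}_M^GR\,\omega_2 \right) = {}_M^GR{\left[ \omega \right]_\times}\omega_2 = 0$ using $\omega \parallel \omega_2$. One small correction to your first step: the claim ${}_M^GR(t)\,\omega_2 = \omega_2$ (i.e.\ that the axis has the same coordinates in $\{G\}$ and $\{M\}$) is too strong, since the initial orientation ${}_M^GR(t_0)$ is arbitrary; what is true, and all you need, is ${}_M^GR(t)\,\omega_2 = {}_M^GR(t_0)\,\omega_2$, a constant vector, which follows from your own observation that the incremental rotation (composed on the body side) fixes its axis $\omega_2$.
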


% \vspace{0.2cm}

\begin{proof}
Similarly, we verify that ${O_k}{N_2} = 0$ is hold for any $k$. Since $\omega$ and $\omega_2$ are parallel at this setting, for any given $\omega_2$, the time derivative of ${}_M^GR\omega_2$ is given by:
\begin{equation}
    \frac{{d\left( {{}_M^GR\omega_2 } \right)}}{{dt}} = \left( {\frac{{d\left( {{}_M^GR} \right)}}{{dt}}} \right)\omega_2  = {}_M^GR{\left[ \omega  \right]_ \times }\omega_2  = 0
\end{equation}

This proves that ${N_2}$ is a constant matrix and belongs to the right null space of ${O}$. ${N_2}$ indicates that the unobservable directions are from ${}^C{p_M}$, and dependent on the non-zero components of ${}_M^CR\omega_2$, or ${}_M^CR\omega$.
\end{proof}

 There could be some other degeneration motion primitives that have not been considered, such as constant angular and linear velocities, constant angular velocity and linear accelerations. We can find these two are special cases for Lemma~\ref{lemma2}. In this paper, we do not derive all degeneration cases where the full column rank condition of $O$ breaks.

As a final remark, we note that the translation calibration parameter ${}^C{p_M}$ is more sensitive to different motions, compared to the rotation and temporal calibration parameter. These theory findings are important for the calibration, as these degenerate motions are likely to occur in practice, such as the planer motion of wheeled robot and the pure translation of flying robot. We run real-world experiments on random generic trajectories with full excitation to avoid these potential specific degenerate trajectories.

\section{Experiments}

We state again that the inputs of two proposed calibration methods are global pose measurements and monocular image stream.
Firstly, the observability analysis in \cref{Observability Analysis} is verified by generating these measurements in the simulation environment. Then the real-world datasets are used to test the calibration accuracy and consistency. The target-based method requires the calibration target to be located in the field of view of the image and geometric prior about the calibration target. Finally, an example of calibrating time-varying spatial-temporal parameters is presented with the online target-less method.

\subsection{Validation of the Observability Analysis} \label{Validation of the Observability Analysis}

The simulated environment includes randomly generated 3D points to be captured by images. The characteristics of the simulated sensors are consistent with those of the actual sensors used in the real-world. Global pose measurements are reported in 120Hz.  Images are received in 20Hz. The Gaussian noises of the sensors are generated and added into the synthetic measurements. \cref{fig:sim-img} shows the synthetic feature points and the corresponding reprojected points in one simulated image during the visual update process. The translation motion of the marker frame is simulated as a sinusoidal trajectory, which is widely used in calibration tasks \cite{li2014online, yang2020online, lv2022observability}.

To validate the observability assertion in \cref{Observability Analysis}, we set ${}_M^CR$ as ${I_3}$, and design five rotation motion cases.

\begin{itemize}

\item Case1: $\omega  = {\left[ {\begin{array}{*{20}{c}}
{0.4\cos \left( {1.5t} \right)}&{0.4\sin \left( t \right)}&0
\end{array}} \right]^T}$.

\item Case2: $\omega  = {\left[ {\begin{array}{*{20}{c}}
0&0&0
\end{array}} \right]^T}$.

\item Case3: $\omega  = {\left[ {\begin{array}{*{20}{c}}
{0.4}&0&0
\end{array}} \right]^T}$.

\item Case4: $\omega  = {\left[ {\begin{array}{*{20}{c}}
0&{0.5}&{0.6}
\end{array}} \right]^T}$.

\item Case5: $\omega  = {\left[ {\begin{array}{*{20}{c}}
{0.1}&{0.2}&{0.3}
\end{array}} \right]^T}$.

\end{itemize}

 The calibration results of these cases are presented in \cref{fig:sim}. The initial rotation error is ${\left[ {\begin{array}{*{20}{c}}
{{{20}^ \circ }}&{{{20}^ \circ }}&{ - {{20}^ \circ }}
\end{array}} \right]^T}$. The initial translation error is ${\left[ {\begin{array}{*{20}{c}}
{ - 5}&{15}&{ - 10}
\end{array}}\right]^T}$cm. The initial time offset error is 50 ms. Case1 corresponds to the generic motion with full excitation. It is clear that the estimation errors of all calibration parameters converge perfectly to near zero within 10s. All calibration parameters are observable in this case. Case 2 corresponds to a pure translation (no rotation) motion. The estimation error of the translation calibration parameter and its $1\sigma $ bound can not approach 0, thus this parameter is unobservable. While the rotation and temporal calibration parameters are still observable. Case3, Case4, and Case5 correspond to the constant axis rotational motion. The non-zero components of this axis indicate the unobservable directions. For example, the rotation axis of Case3 only has non-zero component in the $x$-axis. Thus, the $x$-direction of the translation calibration parameter is unobservable, yet $y$ and $z$ direction are still observable, as shown in \cref{fig:sim}. The similar analysis also applies to Case 4 and Case 5.

\subsection{Real-World Experiments}

\begin{figure*}
  \centering
  \begin{subfigure}[t]{0.24\linewidth}
    \includegraphics[width=\textwidth, height=1.0\textwidth]{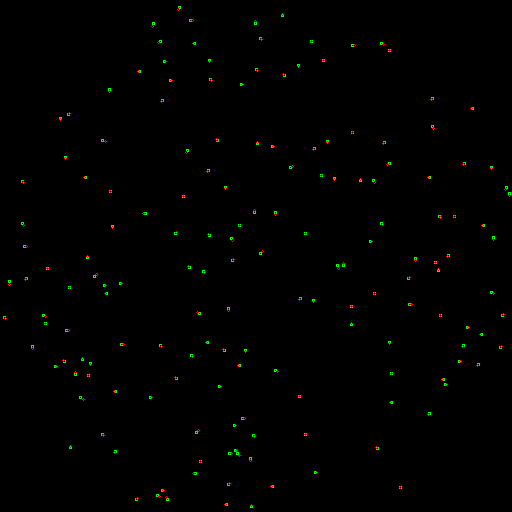}
    \caption{Tracked features and reprojected features in the simulation.}
    \label{fig:sim-img}
  \end{subfigure}
  \hfill
  \begin{subfigure}[t]{0.24\linewidth}
    \includegraphics[width=\textwidth, height=1.0\textwidth]{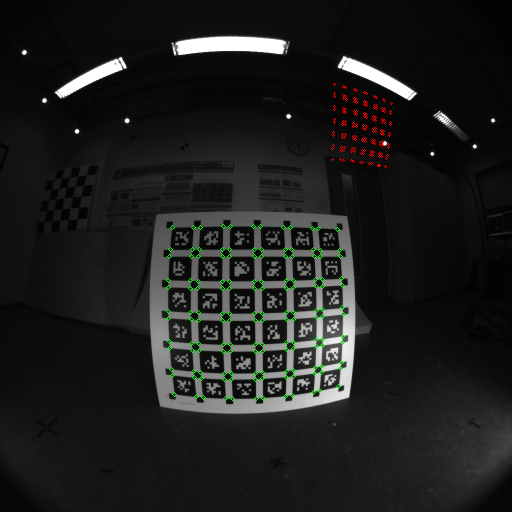}
    \caption{The 1st iteration.}
    \label{fig:1 iter}
  \end{subfigure}
  \hfill
  \begin{subfigure}[t]{0.24\linewidth}
    \includegraphics[width=\textwidth, height=1.0\textwidth]{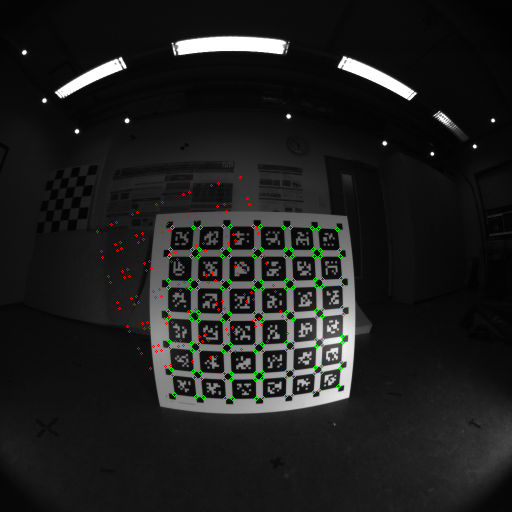}
    \caption{The 2nd iteration.}
  \end{subfigure}
  \hfill
  \begin{subfigure}[t]{0.24\linewidth}
    \includegraphics[width=\textwidth, height=1.0\textwidth]{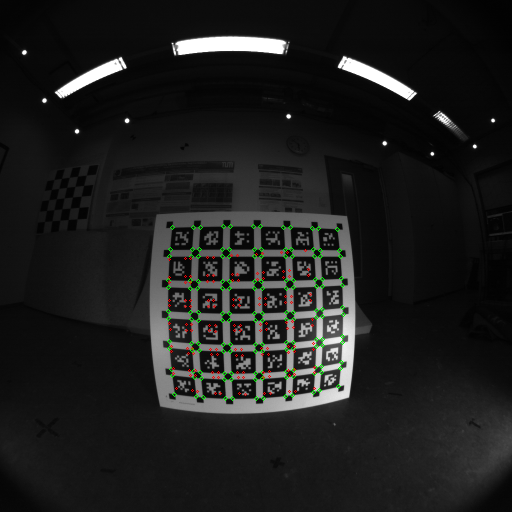}
    \caption{The 3rd iteration.}
  \end{subfigure}
  % \hfill

  \begin{subfigure}[t]{0.24\linewidth}
    \includegraphics[width=\textwidth, height=1.0\textwidth]{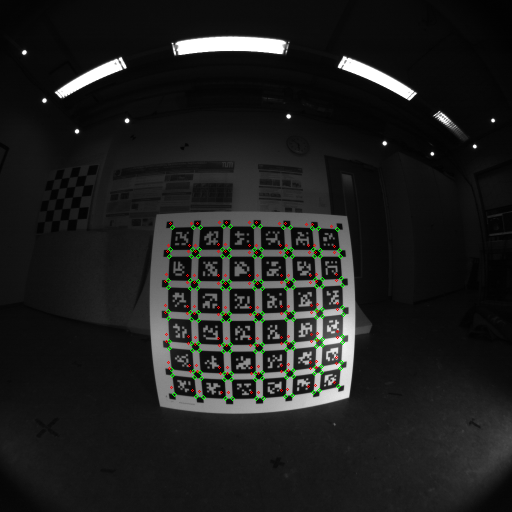}
    \caption{The 4th iteration.}
  \end{subfigure}
  \hfill
  \begin{subfigure}[t]{0.24\linewidth}
    \includegraphics[width=\textwidth, height=1.0\textwidth]{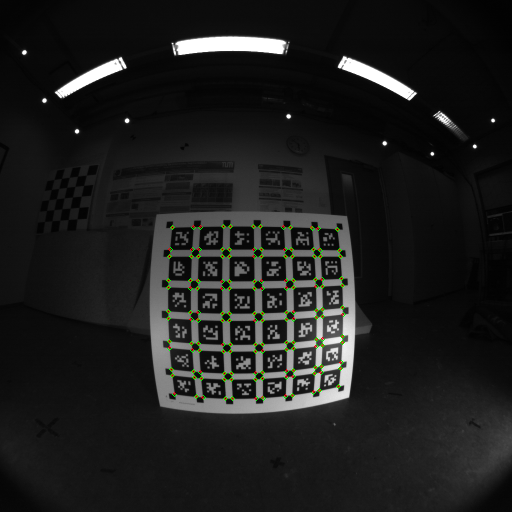}
    \caption{The 5th iteration.}
    \label{fig:5 iter}
  \end{subfigure}
  \hfill
  \begin{subfigure}[t]{0.24\linewidth}
    \includegraphics[width=\textwidth, height=1.0\textwidth]{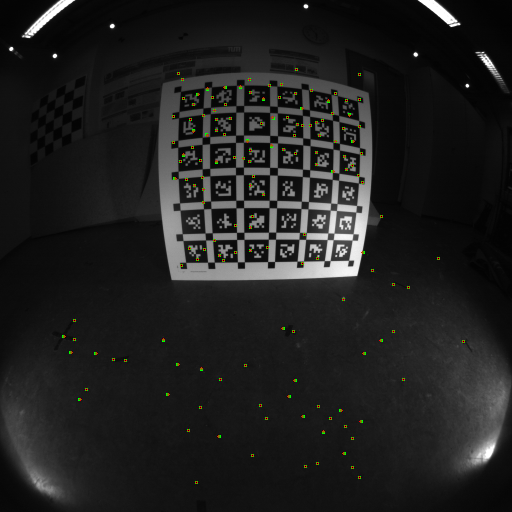}
    \caption{Image update in the environment with calibration target.}
    \label{fig:img-with-target}
  \end{subfigure}
  \hfill
  \begin{subfigure}[t]{0.24\linewidth}
    \includegraphics[width=\textwidth, height=1.0\textwidth]{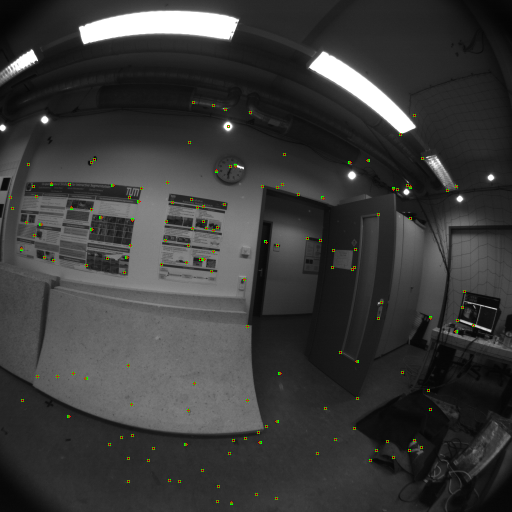}
    \caption{Image update in the environment without calibration target.}
    \label{fig:img-without-target}
  \end{subfigure}
  \caption{Expected feature positions (green) and predicted feature positions (red) in the image.}
  \label{fig:images}
\end{figure*}

\begin{figure*}
  \centering
  \begin{subfigure}{0.2\linewidth}
    \includegraphics[width=\textwidth, height=2.4\textwidth]{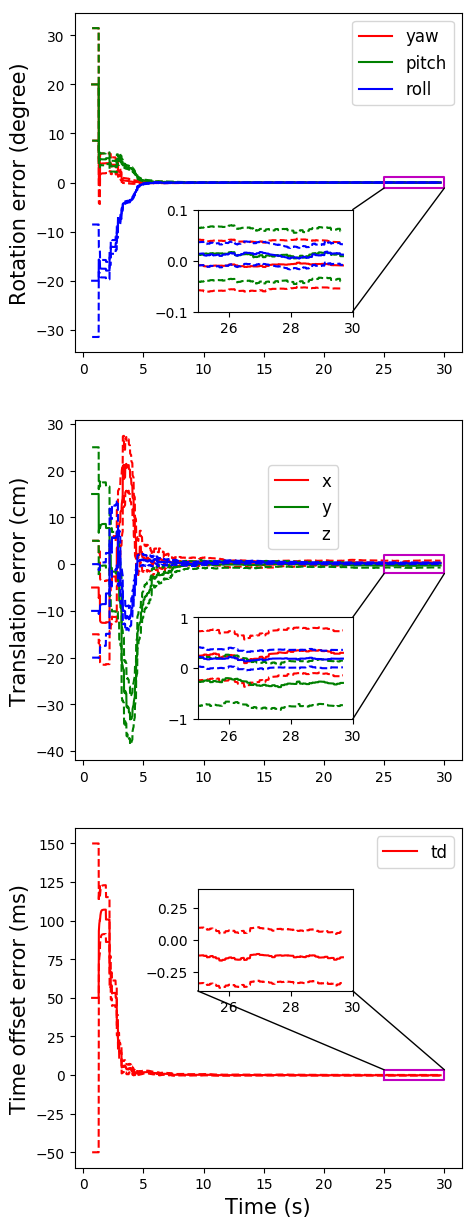}
  \end{subfigure}
  \hfill
  \begin{subfigure}{0.19\linewidth}
    \includegraphics[width=\textwidth, height=2.526\textwidth]{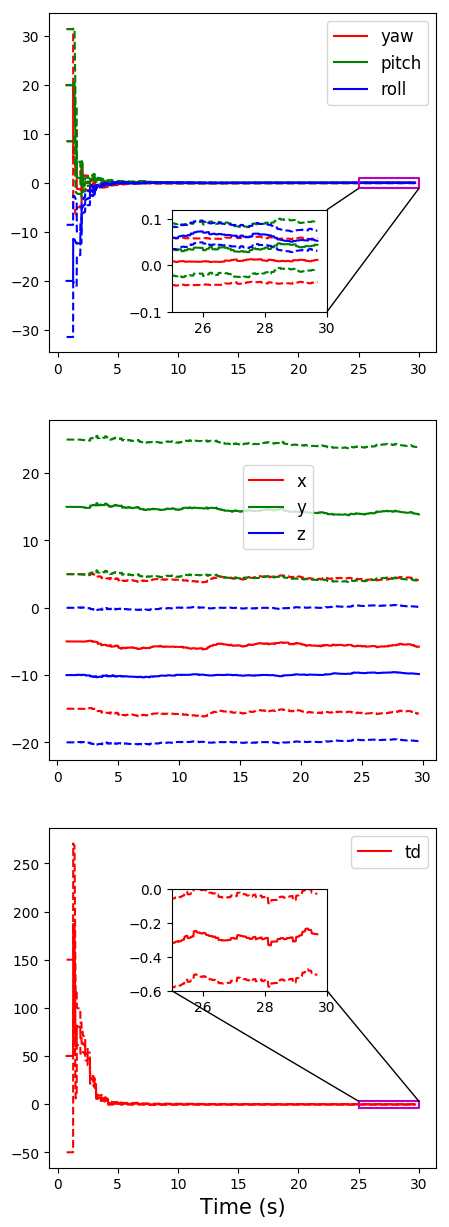}
  \end{subfigure}
  \hfill
  \begin{subfigure}{0.2\linewidth}
    \includegraphics[width=\textwidth, height=2.4\textwidth]{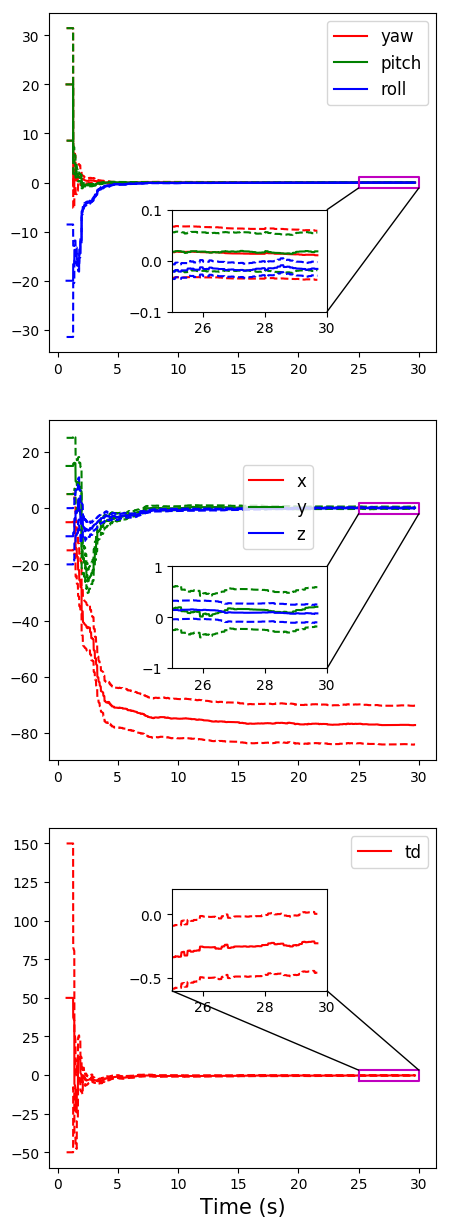}
  \end{subfigure}
  \hfill
  \begin{subfigure}{0.2\linewidth}
    \includegraphics[width=\textwidth, height=2.4\textwidth]{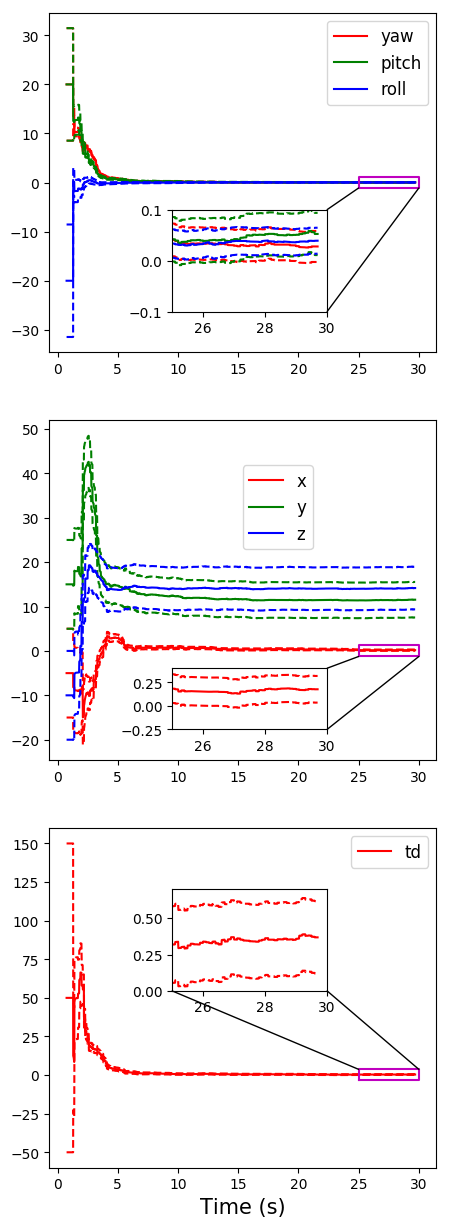}
  \end{subfigure}
  \hfill
  \begin{subfigure}{0.19\linewidth}
    \includegraphics[width=\textwidth, height=2.526\textwidth]{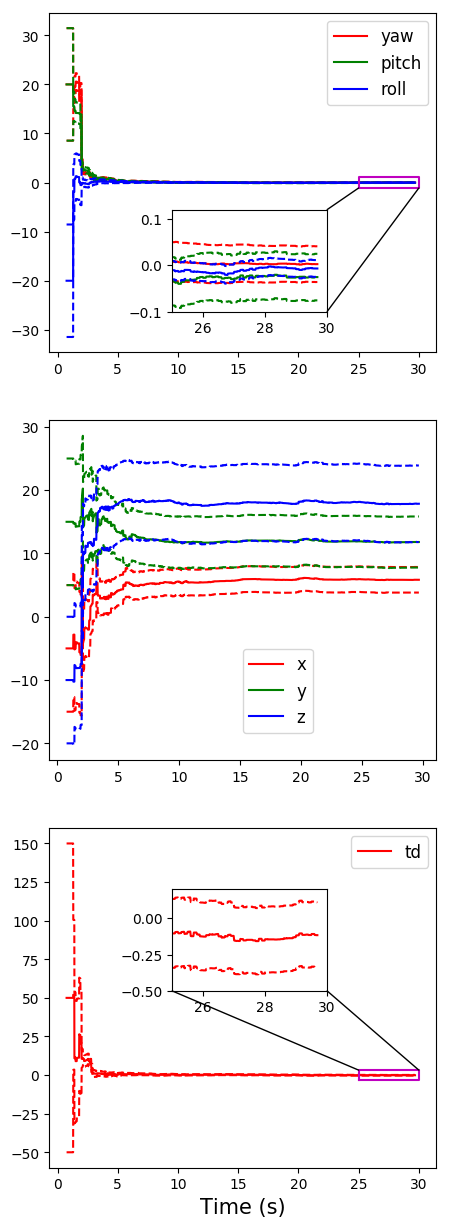}
  \end{subfigure}
  
  \caption{Errors (solid lines) and $1\sigma $ bounds (dashed lines) of the spatial-temporal calibration parameters. $x$-axis represents time in seconds. Left to right corresponds to Case1 to Case5 in \cref{Validation of the Observability Analysis}. The estimation error of the rotation and temporal calibration parameters perfectly approach to zero for any cases. While the convergence results of the translation calibration parameter are varied from case to case.}
  \label{fig:sim}
\end{figure*}

\begin{figure}
  \centering
  \includegraphics[width=0.46\textwidth]{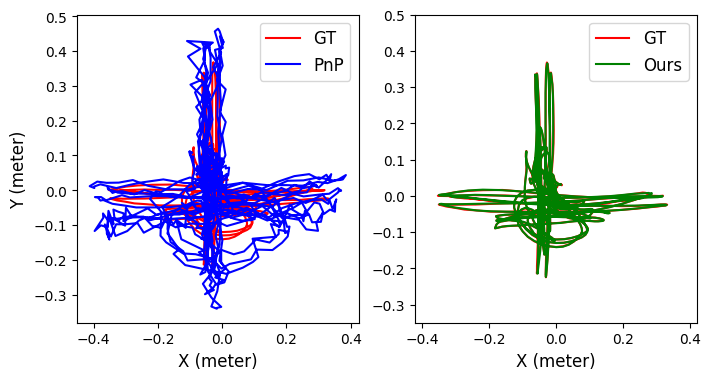}
  \caption{ $imu1$ is used. GT: groundtruth trajectory output from motion capture system. PnP: camera trajectory output from PnP algorithm. Ours: refined camera trajectory ${}_{{C_i}}^WT,{\rm{ }}i = 1 \cdots N$. }
  \label{traj}
\end{figure}

\begin{figure}
  \centering
  \includegraphics[width=0.46\textwidth]{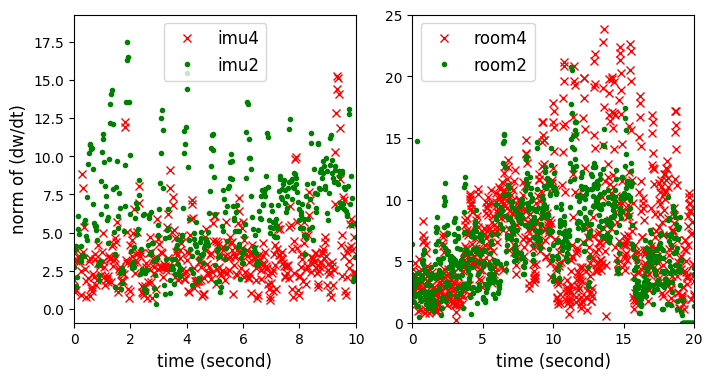}
  \caption{Norm of ${{d\omega } \mathord{\left/
 {\vphantom {{d\omega } {dt}}} \right.
 \kern-\nulldelimiterspace} {dt}}$.}
  \label{norm}
\end{figure}

\begin{table*}
  \caption{Average RMSE of the calibration results (mean value $\pm$ standard deviation) over 50 Monte-Carlo trials. Method1: target-less method. Method2: target-based method. L: left camera is used. R: right camera is used.}
  \centering
  \begin{tabular}{@{}ccccccc@{}}
    \toprule
    \multirow{2}{*}{Sequence} & \multicolumn{2}{c}{Rotation (deg)} & \multicolumn{2}{c}{Translation (cm)} & \multicolumn{2}{c}{Time offset (ms)} \\
    \cmidrule(lr){2-3} \cmidrule(lr){4-5} \cmidrule(lr){6-7}
    & {Method1} & {Method2} & {Method1} & {Method2} & {Method1} & {Method2} \\
    \midrule
    imu1 (L) & 0.124 $\pm$ 0.051 & 0.032 $\pm$ 4.74e-05 & 0.572 $\pm$ 0.126 & 0.103 $\pm$ 1.65e-05  & 0.543 $\pm$ 0.128 & 0.339 $\pm$ 0.00e-05 \\
    imu2 (L) & 0.142 $\pm$ 0.043 & 0.035 $\pm$ 4.63e-07 & 0.336 $\pm$ 0.076 & 0.090 $\pm$ 0.00e-07  & 0.149 $\pm$ 0.059 & 0.300 $\pm$ 0.00e-07 \\
    imu3 (L) & 0.074 $\pm$ 0.038 & 0.048 $\pm$ 0.00e-07 & 0.686 $\pm$ 0.141 & 0.146 $\pm$ 0.00e-07  & 0.088 $\pm$ 0.069 & 0.757 $\pm$ 0.00e-07 \\
    imu4 (L) & 0.083 $\pm$ 0.053 & 0.065 $\pm$ 3.91e-07 & 1.014 $\pm$ 0.115 & 0.125 $\pm$ 0.00e-07  & 1.156 $\pm$ 0.144 & 0.960 $\pm$ 0.00e-07 \\
    \hline\hline
    imu1 (R) & 0.075 $\pm$ 0.024 & 0.027 $\pm$ 9.97e-07 & 1.040 $\pm$ 0.228 & 0.085 $\pm$ 0.00e-07  & 0.432 $\pm$ 0.132 & 0.335 $\pm$ 0.00e-07 \\
    imu2 (R) & 0.180 $\pm$ 0.044 & 0.034 $\pm$ 0.00e-07 & 0.465 $\pm$ 0.270 & 0.075 $\pm$ 0.00e-07  & 0.161 $\pm$ 0.082 & 0.305 $\pm$ 0.00e-07 \\
    imu3 (R) & 0.125 $\pm$ 0.051 & 0.038 $\pm$ 0.00e-07 & 0.719 $\pm$ 0.101 & 0.136 $\pm$ 0.00e-07  & 0.091 $\pm$ 0.096 & 0.766 $\pm$ 0.00e-07 \\
    imu4 (R) & 0.087 $\pm$ 0.039 & 0.050 $\pm$ 3.22e-07 & 1.077 $\pm$ 0.119 & 0.132 $\pm$ 0.00e-07  & 1.449 $\pm$ 0.147 & 0.955 $\pm$ 0.00e-07 \\
    \bottomrule
  \end{tabular}
  \label{tab:test1}
\end{table*}

Firstly we present the rationale of dataset selection for real-world experiments.
For the target-less method, the simulation experiments in \cref{Validation of the Observability Analysis} show that it is advised to choose the fully excited 6DoF trajectory. The experiments in \cite{yang2020online} also inspire us to utilize the fully excited hand-held TUM-VI Dataset \cite{schubert2018tum} instead of under-actuated dataset, such as EuRoC MAV Dataset \cite{burri2016euroc}. TUM-VI Dataset contains multiple sequences with or without calibration target. Each sequence provides images at 20Hz, global pose measurements at 120Hz. These raw measurements together with IMU measurements are post-processed to ensure time-synchronization. Thus it is convenient to set the time offset by manually shifting the timestamps of the global pose measurements with a certain value. The shifted time offset is the reference value of the temporal parameter. As \cite{schubert2018tum} has leveraged IMU to align the marker frame to the IMU frame, the transformation from IMU to camera \cite{sommer2020efficient}, is also the reference value of the interested spatial parameter.

For each selected dataset, we run the specific calibration method multiple times to examine the statistical properties. Reference value is perturbed to perform a Monte-Carlo trial. The perturbed calibration parameters are set as initial calibration guess. Random errors drawn from zero-mean Gaussian distributions are added to reference values. For rotation and translation parameter, $1\sigma $ values of the error distribution along each axis are ${20^ \circ }$ and 10 cm respectively. For temporal parameter, the $1\sigma $ value is set as 50 ms.

\subsubsection{Environments with target}

Sequence $\left\{ {imu1 \sim imu4} \right\}$ is selected because the environments of these datasets contain the calibration target.

\cite{furrer2018evaluation}
% \footnote{\href{https://github.com/ethz-asl/hand_eye_calibration}{Hand-Eye-Calibration}} 
can not work for these sequences due to the relatively large trajectory noise output by PnP algorithm, as shown in \cref{traj}. The absolute trajectory error (ATE) of the PnP trajectory is 7.29 cm, while the optimized trajectory of our target-based method has an ATE of only 0.28 cm. Clearly, the accuracy of camera trajectory has significantly improvement by fully utilizing the raw measurements.
% Additional experimental results on the datasets from \cite{furrer2018evaluation} are provided in \cref{sec:additional_results} of supplementary material.
Additional comparison results are provided in \cref{sec:additional_results} of supplementary material.

To visualize the estimation accuracy of the calibration parameters of the target-based method, the predicted feature position linked with calibration parameters is defined as:
\begin{equation}
    \begin{array}{l}
    z = \pi \left( {{}^C{p_f}} , \varsigma \right)\\
    {}^C{p_f} = {}_M^CT{}_G^MT\left( {t + {t_d}} \right){}_W^GT{}^W{p_f}\\
    % {}^G{p_f} = {}_W^GT{}^W{p_f}
    \end{array}
    \label{eq:reprojection}
\end{equation}

Where ${f}$ denotes the AprilTag corner. $t$ is the image timestamp. ${{}_W^GT}$, ${{}_M^CT}$, ${{t_d}}$, and ${\varsigma}$ are variables from \cref{eq:opt_var}.

For a specific run of the target-based method, the iterative update results are visualized from \cref{fig:1 iter} to \cref{fig:5 iter}. After 5 iterations, all predicted feature positions are perfectly close to expected feature positions. \cref{fig:img-with-target} shows the feature points update of the target-less method. The predicted feature position is obtained via \cref{eq:feature_model}.

When using the left camera, the RMSE of the calibration results are shown in \cref{tab:test1}. As expected, the calibration accuracy and consistency of the target-based method are better than the target-less method. When using the right camera, the corresponding results are also shown in \cref{tab:test1}. Both calibration methods demonstrate similar accuracy and consistency for left and right camera. 

Compared with the target-based method, the target-less method's accuracy is affected by imperfect visual feature tracking and numerical precision of the triangulation process of visual landmarks. In addition, the target-less method is an online estimator, which can not use all available measurements simultaneously. 

It is worth noting that the dataset itself or the trajectory characteristic has impacts on the calibration accuracy for both methods. For example, the estimation accuracy of the translation calibration parameter of $imu2$ is better than that of $imu4$. Inspired by the observability analysis in \cref{Observability Analysis} and \cref{Validation of the Observability Analysis}, it is reasonable to examine the rotation excitation to reveal the behind reason. \cref{norm} depicts the norm of the angular velocity difference. $imu2$ has more sufficient rotation excitation, improving the observability of the translation calibration parameter.

% Given the spatial-temporal calibration results and the recovered AprilTag corners in $\{ G\} $ (see \cref{eq:f_in_G} and \cref{eq:reprojection}),  \cref{tab:example} reports the average reprojection error:
% \begin{equation}
%     \bar z = \frac{1}{{NK}}\sum\limits_{i = 1}^N {\sum\limits_{j = 1}^K {\left\| {{z_{ij}}} \right\|} }
% \end{equation}

% The target-based method achieves sub-pixel accuracy. While the target-less method has pixel-level accuracy.

% \begin{table}
%   \caption{Reprojection error (unit: pixel). Method1: target-less method. Method2: target-based method.}
%   \centering
%   \begin{tabular}{@{}ccccc@{}}
%     \toprule
%     \multirow{2}{*}{Sequence} & \multicolumn{2}{c}{Left camera} & \multicolumn{2}{c}{Right camera} \\
%     \cmidrule(lr){2-3} \cmidrule(lr){4-5}
%     & {Method1} & {Method2} & {Method1} & {Method2} \\
%     \midrule
%     imu1 & 1.90 & 0.77 & 2.29 & 0.77 \\
%     imu2 & 1.40 & 0.79 & 1.96 & 0.79 \\
%     imu3 & 1.09 & 0.63 & 2.50 & 0.63 \\
%     imu4 & 2.18 & 0.62 & 2.23 & 0.62 \\
%     \bottomrule
%   \end{tabular}
%   \label{tab:example}
% \end{table}

\subsubsection{Environments without target}

To eliminate the impact of the calibration target on the accuracy of the target-less method, we conduct experiments on the sequence $\left\{ {room1 \sim room6} \right\}$ without calibration target. The target-based method can not work at this setting. 

The calibration results of the target-less method are shown in \cref{tab:test3}. Compared with the sequence with calibration target (see \cref{tab:test1}), the estimation of the calibration parameter does not incur loss of performance without the calibration target in the field of view. The calibration accuracy is still impacted by the trajectory itself. For example, the estimation accuracy of the translation calibration parameter of $room4$ is better than that of $room2$. \cref{norm} shows that $room4$ has more sufficient rotation excitation.

\begin{table}
  \caption{Average RMSE (L / R) of the calibration results over 50 Monte-Carlo trials. L: left camera. R: right camera. The units for rotation, translation and time offset are in deg, cm and ms.}
  \centering
  \begin{tabular}{@{}cccc@{}}
    \toprule
    Sequence & Rotation & Translation & Time offset \\
    \midrule
    room1 & 0.033 / 0.056 & 0.681 / 0.584 & 0.101 / 0.073 \\
    room2 & 0.136 / 0.136 & 0.860 / 0.758 & 0.957 / 0.930 \\
    room3 & 0.036 / 0.057 & 0.657 / 0.550 & 1.298 / 1.264 \\
    room4 & 0.042 / 0.043 & 0.315 / 0.385 & 0.633 / 0.588 \\
    room5 & 0.033 / 0.067 & 0.566 / 0.484 & 0.398 / 0.411 \\
    room6 & 0.161 / 0.180 & 0.765 / 0.708 & 0.601 / 0.696 \\
    \bottomrule
  \end{tabular}
  \label{tab:test3}
\end{table}

% \subsubsection{Calibration for time-varying parameters}

For all the results presented so far, the spatial-temporal parameters are assumed to be constant, which is also the most common scenario in practice. Considering the vibration or morphology change of the robot platform \cite{falanga2018foldable} and clock drift during the running, it is also worth investigating the calibration of time-varying spatial-temporal parameters, a more challenge scenario. $room4$ is used here for test. To construct time-varying spatial parameters, the global pose measurements are perturbed. ${}_M^{M'}T$ is the designed perturbation. The spatial parameters are changed accordingly.
\begin{equation}
    {}_G^{M'}T = {}_M^{M'}T{}_G^MT {\quad\quad} {}_C^{M'}T = {}_M^{M'}T{}_C^MT
\end{equation}

The time-vary temporal parameter is constructed more straightforward by changing the timestamps of the global pose measurements with designed time-vary values.

The target-based method can not work as it includes constant calibration parameters in state vector. And the requirement of facing the calibration target makes it impractical during the large change of calibration parameters. While EKF-based target-less method could handle dynamic change of state naturally, even without the prior knowledge about such change. As shown in \cref{fig:vary}, the time-varying quantity of spatial-temporal parameter is designed to change linearly with time. 
% The initial rotation error is ${\left[ {\begin{array}{*{20}{c}}
% {{{20}^ \circ }}&{{{20}^ \circ }}&{{{20}^ \circ }}
% \end{array}} \right]^T}$. The initial translation error is ${\left[ {\begin{array}{*{20}{c}}
% {10}&{10}&{10}
% \end{array}}\right]^T}$cm. The initial time offset error is 60 ms.
The initial rotation and translation errors along each axis are ${20^ \circ }$ and 10 cm respectively. The initial time offset error is 60 ms.
Despite the significant estimation errors at the beginning, the target-less method could quickly converge to the groundtruth value and accurately track the time-varying change. After 10s, the average tracking RMSE of the rotation change, the translation change and the time offset change are ${1.754^ \circ }$, 1.346 cm and 4.151 ms respectively. Once dynamic change stage is over, these small errors mean that good initial guess is provided for follow-up constant parameters calibration.

\begin{figure}
  \centering
  \includegraphics[width=0.478\textwidth, height=0.18\textwidth]{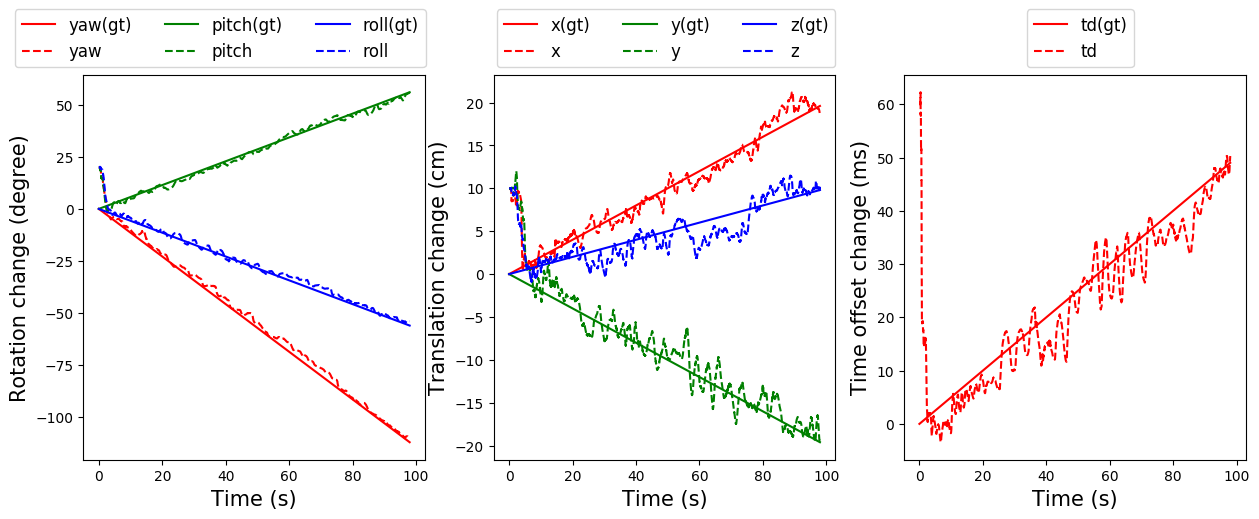}
  \caption{Groundtruth (solid lines) and estimation (dashed lines) of the time-varying change of the spatial-temporal parameters.}
  \label{fig:vary}
\end{figure}
\section{CONCLUSIONS}

In this work, we propose two novel calibration methods to estimate the spatial-temporal parameters between the camera and the global pose sensor. 
One is a target-based method, it adopts offline full-batch nonlinear least squares optimization.
Another is a target-less method based on an online EKF estimator. 
The observability analysis of the target-less method shows that the calibration parameters are observable when the system is fully excited by 6DoF movements. 
% Meanwhile, the common-seen degenerate motions in practice are identified, then verified in the simulation experiments. 
% The accuracy and consistency of both methods are evaluated with hand-held real-world experiments. 
% The average RMSE of the rotation error, the translation error and the time offset error of the target-less method are ${0.093^ \circ }$, 0.661 cm and 0.601 ms respectively. The corresponding errors of the target-based method are ${0.041^ \circ }$, 0.112 cm and 0.589 ms. 
Real-world experiments demonstrate both methods provide accurate and reliable calibration results when traditional hand-eye calibration fails to work.
% The average RMSE of the rotation error, the translation error and the time offset error of the target-based method are ${0.04^ \circ }$, 0.11 cm and 0.59 ms respectively. The corresponding errors of the target-less method are ${0.09^ \circ }$, 0.66 cm and 0.60 ms.
Moreover, the ability of capturing time-varying parameters, rarely studied in literature, is verified successfully for the target-less method. 
Proposed methods can be easily extended to other global pose sensors besides motion capture system, and different camera models. In the future, we plan to improve the accuracy of the target-less method using sliding window optimization.
{
    \small
    \bibliographystyle{ieeenat_fullname}
    \bibliography{main}

\begin{thebibliography}{32}
\providecommand{\natexlab}[1]{#1}
\providecommand{\url}[1]{\texttt{#1}}
\expandafter\ifx\csname urlstyle\endcsname\relax
  \providecommand{\doi}[1]{doi: #1}\else
  \providecommand{\doi}{doi: \begingroup \urlstyle{rm}\Url}\fi

\bibitem[Barfoot(2017)]{barfoot2017state}
Timothy~D Barfoot.
\newblock \emph{State estimation for robotics}.
\newblock Cambridge University Press, 2017.

\bibitem[Barfoot et~al.(2014)Barfoot, Tong, and S{\"a}rkk{\"a}]{barfoot2014batch}
Tim~D Barfoot, Chi~Hay Tong, and Simo S{\"a}rkk{\"a}.
\newblock Batch continuous-time trajectory estimation as exactly sparse gaussian process regression.
\newblock In \emph{Robotics: Science and Systems}, pages 1--10. Citeseer, 2014.

\bibitem[Burri et~al.(2016)Burri, Nikolic, Gohl, Schneider, Rehder, Omari, Achtelik, and Siegwart]{burri2016euroc}
Michael Burri, Janosch Nikolic, Pascal Gohl, Thomas Schneider, Joern Rehder, Sammy Omari, Markus~W Achtelik, and Roland Siegwart.
\newblock The euroc micro aerial vehicle datasets.
\newblock \emph{The International Journal of Robotics Research}, 35\penalty0 (10):\penalty0 1157--1163, 2016.

\bibitem[Chebrolu et~al.(2021)Chebrolu, L{\"a}be, Vysotska, Behley, and Stachniss]{chebrolu2021adaptive}
Nived Chebrolu, Thomas L{\"a}be, Olga Vysotska, Jens Behley, and Cyrill Stachniss.
\newblock Adaptive robust kernels for non-linear least squares problems.
\newblock \emph{IEEE Robotics and Automation Letters}, 6\penalty0 (2):\penalty0 2240--2247, 2021.

\bibitem[Chen et~al.(1990)Chen, Jiang, and Hung]{chen1990local}
Zhe Chen, Ke Jiang, and James~C Hung.
\newblock Local observability matrix and its application to observability analyses.
\newblock In \emph{[Proceedings] IECON'90: 16th Annual Conference of IEEE Industrial Electronics Society}, pages 100--103. IEEE, 1990.

\bibitem[Delmerico et~al.(2019)Delmerico, Cieslewski, Rebecq, Faessler, and Scaramuzza]{delmerico2019we}
Jeffrey Delmerico, Titus Cieslewski, Henri Rebecq, Matthias Faessler, and Davide Scaramuzza.
\newblock Are we ready for autonomous drone racing? the uzh-fpv drone racing dataset.
\newblock In \emph{2019 International Conference on Robotics and Automation (ICRA)}, pages 6713--6719. IEEE, 2019.

\bibitem[Dong et~al.(2018)Dong, Mukadam, Boots, and Dellaert]{dong2018sparse}
Jing Dong, Mustafa Mukadam, Byron Boots, and Frank Dellaert.
\newblock Sparse gaussian processes on matrix lie groups: A unified framework for optimizing continuous-time trajectories.
\newblock In \emph{2018 IEEE International Conference on Robotics and Automation (ICRA)}, pages 6497--6504. IEEE, 2018.

\bibitem[Falanga et~al.(2018)Falanga, Kleber, Mintchev, Floreano, and Scaramuzza]{falanga2018foldable}
Davide Falanga, Kevin Kleber, Stefano Mintchev, Dario Floreano, and Davide Scaramuzza.
\newblock The foldable drone: A morphing quadrotor that can squeeze and fly.
\newblock \emph{IEEE Robotics and Automation Letters}, 4\penalty0 (2):\penalty0 209--216, 2018.

\bibitem[Furgale et~al.(2013)Furgale, Rehder, and Siegwart]{furgale2013unified}
Paul Furgale, Joern Rehder, and Roland Siegwart.
\newblock Unified temporal and spatial calibration for multi-sensor systems.
\newblock In \emph{2013 IEEE/RSJ International Conference on Intelligent Robots and Systems}, pages 1280--1286. IEEE, 2013.

\bibitem[Furrer et~al.(2018)Furrer, Fehr, Novkovic, Sommer, Gilitschenski, and Siegwart]{furrer2018evaluation}
Fadri Furrer, Marius Fehr, Tonci Novkovic, Hannes Sommer, Igor Gilitschenski, and Roland Siegwart.
\newblock Evaluation of combined time-offset estimation and hand-eye calibration on robotic datasets.
\newblock In \emph{Field and Service Robotics: Results of the 11th International Conference}, pages 145--159. Springer, 2018.

\bibitem[Geneva et~al.(2020)Geneva, Eckenhoff, Lee, Yang, and Huang]{geneva2020openvins}
Patrick Geneva, Kevin Eckenhoff, Woosik Lee, Yulin Yang, and Guoquan Huang.
\newblock Openvins: A research platform for visual-inertial estimation.
\newblock In \emph{2020 IEEE International Conference on Robotics and Automation (ICRA)}, pages 4666--4672. IEEE, 2020.

\bibitem[Heng et~al.(2013)Heng, Li, and Pollefeys]{heng2013camodocal}
Lionel Heng, Bo Li, and Marc Pollefeys.
\newblock Camodocal: Automatic intrinsic and extrinsic calibration of a rig with multiple generic cameras and odometry.
\newblock In \emph{2013 IEEE/RSJ International Conference on Intelligent Robots and Systems}, pages 1793--1800. IEEE, 2013.

\bibitem[Hesch et~al.(2013)Hesch, Kottas, Bowman, and Roumeliotis]{hesch2013consistency}
Joel~A Hesch, Dimitrios~G Kottas, Sean~L Bowman, and Stergios~I Roumeliotis.
\newblock Consistency analysis and improvement of vision-aided inertial navigation.
\newblock \emph{IEEE Transactions on Robotics}, 30\penalty0 (1):\penalty0 158--176, 2013.

\bibitem[Li(2014)]{li2014visual}
Mingyang Li.
\newblock \emph{Visual-inertial odometry on resource-constrained systems}.
\newblock University of California, Riverside, 2014.

\bibitem[Li and Mourikis(2013{\natexlab{a}})]{li2013high}
Mingyang Li and Anastasios~I Mourikis.
\newblock High-precision, consistent ekf-based visual-inertial odometry.
\newblock \emph{The International Journal of Robotics Research}, 32\penalty0 (6):\penalty0 690--711, 2013{\natexlab{a}}.

\bibitem[Li and Mourikis(2013{\natexlab{b}})]{li2013optimization}
Mingyang Li and Anastasios~I Mourikis.
\newblock Optimization-based estimator design for vision-aided inertial navigation.
\newblock In \emph{Robotics: Science and Systems}, pages 241--248. Berlin Germany, 2013{\natexlab{b}}.

\bibitem[Li and Mourikis(2014)]{li2014online}
Mingyang Li and Anastasios~I Mourikis.
\newblock Online temporal calibration for camera--imu systems: Theory and algorithms.
\newblock \emph{The International Journal of Robotics Research}, 33\penalty0 (7):\penalty0 947--964, 2014.

\bibitem[Lv et~al.(2022)Lv, Zuo, Hu, Xu, Huang, and Liu]{lv2022observability}
Jiajun Lv, Xingxing Zuo, Kewei Hu, Jinhong Xu, Guoquan Huang, and Yong Liu.
\newblock Observability-aware intrinsic and extrinsic calibration of lidar-imu systems.
\newblock \emph{IEEE Transactions on Robotics}, 2022.

\bibitem[Mourikis and Roumeliotis(2007)]{mourikis2007multi}
Anastasios~I Mourikis and Stergios~I Roumeliotis.
\newblock A multi-state constraint kalman filter for vision-aided inertial navigation.
\newblock In \emph{Proceedings 2007 IEEE international conference on robotics and automation}, pages 3565--3572. IEEE, 2007.

\bibitem[Olson(2011)]{olson2011apriltag}
Edwin Olson.
\newblock Apriltag: A robust and flexible visual fiducial system.
\newblock In \emph{2011 IEEE international conference on robotics and automation}, pages 3400--3407. IEEE, 2011.

\bibitem[Qin and Shen(2018)]{qin2018online}
Tong Qin and Shaojie Shen.
\newblock Online temporal calibration for monocular visual-inertial systems.
\newblock In \emph{2018 IEEE/RSJ International Conference on Intelligent Robots and Systems (IROS)}, pages 3662--3669. IEEE, 2018.

\bibitem[Qiu et~al.(2020)Qiu, Qin, Pan, Liu, and Shen]{qiu2020real}
Kejie Qiu, Tong Qin, Jie Pan, Siqi Liu, and Shaojie Shen.
\newblock Real-time temporal and rotational calibration of heterogeneous sensors using motion correlation analysis.
\newblock \emph{IEEE Transactions on Robotics}, 37\penalty0 (2):\penalty0 587--602, 2020.

\bibitem[Rehder et~al.(2016{\natexlab{a}})Rehder, Nikolic, Schneider, Hinzmann, and Siegwart]{rehder2016extending}
Joern Rehder, Janosch Nikolic, Thomas Schneider, Timo Hinzmann, and Roland Siegwart.
\newblock Extending kalibr: Calibrating the extrinsics of multiple imus and of individual axes.
\newblock In \emph{2016 IEEE International Conference on Robotics and Automation (ICRA)}, pages 4304--4311. IEEE, 2016{\natexlab{a}}.

\bibitem[Rehder et~al.(2016{\natexlab{b}})Rehder, Siegwart, and Furgale]{rehder2016general}
Joern Rehder, Roland Siegwart, and Paul Furgale.
\newblock A general approach to spatiotemporal calibration in multisensor systems.
\newblock \emph{IEEE Transactions on Robotics}, 32\penalty0 (2):\penalty0 383--398, 2016{\natexlab{b}}.

\bibitem[Schubert et~al.(2018{\natexlab{a}})Schubert, Demmel, Usenko, Stuckler, and Cremers]{schubert2018direct}
David Schubert, Nikolaus Demmel, Vladyslav Usenko, Jorg Stuckler, and Daniel Cremers.
\newblock Direct sparse odometry with rolling shutter.
\newblock In \emph{Proceedings of the European Conference on Computer Vision (ECCV)}, pages 682--697, 2018{\natexlab{a}}.

\bibitem[Schubert et~al.(2018{\natexlab{b}})Schubert, Goll, Demmel, Usenko, St{\"u}ckler, and Cremers]{schubert2018tum}
David Schubert, Thore Goll, Nikolaus Demmel, Vladyslav Usenko, J{\"o}rg St{\"u}ckler, and Daniel Cremers.
\newblock The tum vi benchmark for evaluating visual-inertial odometry.
\newblock In \emph{2018 IEEE/RSJ International Conference on Intelligent Robots and Systems (IROS)}, pages 1680--1687. IEEE, 2018{\natexlab{b}}.

\bibitem[Sola et~al.(2018)Sola, Deray, and Atchuthan]{sola2018micro}
Joan Sola, Jeremie Deray, and Dinesh Atchuthan.
\newblock A micro lie theory for state estimation in robotics.
\newblock \emph{arXiv preprint arXiv:1812.01537}, 2018.

\bibitem[Sommer et~al.(2020)Sommer, Usenko, Schubert, Demmel, and Cremers]{sommer2020efficient}
Christiane Sommer, Vladyslav Usenko, David Schubert, Nikolaus Demmel, and Daniel Cremers.
\newblock Efficient derivative computation for cumulative b-splines on lie groups.
\newblock In \emph{Proceedings of the IEEE/CVF Conference on Computer Vision and Pattern Recognition}, pages 11148--11156, 2020.

\bibitem[Trawny and Roumeliotis(2005)]{trawny2005indirect}
Nikolas Trawny and Stergios~I Roumeliotis.
\newblock Indirect kalman filter for 3d attitude estimation.
\newblock \emph{University of Minnesota, Dept. of Comp. Sci. \& Eng., Tech. Rep}, 2:\penalty0 2005, 2005.

\bibitem[Usenko et~al.(2018)Usenko, Demmel, and Cremers]{usenko2018double}
Vladyslav Usenko, Nikolaus Demmel, and Daniel Cremers.
\newblock The double sphere camera model.
\newblock In \emph{2018 International Conference on 3D Vision (3DV)}, pages 552--560. IEEE, 2018.

\bibitem[Yang et~al.(2019)Yang, Geneva, Eckenhoff, and Huang]{yang2019degenerate}
Yulin Yang, Patrick Geneva, Kevin Eckenhoff, and Guoquan Huang.
\newblock Degenerate motion analysis for aided ins with online spatial and temporal sensor calibration.
\newblock \emph{IEEE Robotics and Automation Letters}, 4\penalty0 (2):\penalty0 2070--2077, 2019.

\bibitem[Yang et~al.(2020)Yang, Geneva, Zuo, and Huang]{yang2020online}
Yulin Yang, Patrick Geneva, Xingxing Zuo, and Guoquan Huang.
\newblock Online imu intrinsic calibration: Is it necessary?
\newblock In \emph{Robotics: Science and Systems}, 2020.

\end{thebibliography}
}
% Comment the content of Section 9 to prevent the main text from exceeding 8 pages
\clearpage
\setcounter{page}{1}
\maketitlesupplementary

\section{Analytical on-manifold Jacobians for the target-based method}
\label{sec:Jacobians}

The optimization function (\cref{eq:min}) contains two types of measurement residual, namely pixel measurement residual and global pose measurement residual. The Jacobians of these residuals with respect to the optimization variables are provided here. On-manifold formulation of the optimization variables, like SE(3) transformations, allows us to easily calculate analytical Jacobian which is more accurate and computational efficient than numerical differentiation.

\subsection{Jacobians of pixel measurement residual}

Firstly, we analyze the Jacobians involved in the pixel measurement residual ${r_{ij}}$:
\begin{equation}
    \begin{array}{l}
    {r_{ij}} = \pi \left( {{}^{{C_i}}{p_{{f_j}}},\varsigma } \right) - {u_{ij}}\\
    {}^{{C_i}}{p_{{f_j}}} = {}_W^{{C_i}}T{}^W{p_{{f_j}}}
    \end{array}
\end{equation}

The subset of optimization variables related to ${r_{ij}}$ is noted as:
\begin{equation}
    {\chi_{{s_1}}}  = \left\{ {\begin{array}{*{20}{c}}
    {{}_{{C_i}}^WT}&{\varsigma}
    \end{array}} \right\}
\end{equation}

The Jacobians of the pixel residual ${r_{ij}}$ with respect to the 3D point in camera frame ${}^{{C_i}}{p_{{f_j}}}$ and the camera intrinsic $\varsigma$ are $\frac{{\partial {r_{ij}}}}{{\partial {}^{{C_i}}{p_{{f_j}}}}}$ and $\frac{{\partial {r_{ij}}}}{{\partial \varsigma }}$ respectively. Both are determined by the camera projection model \cite{usenko2018double, heng2013camodocal}. The Jacobian of the pixel residual ${r_{ij}}$ with respect to the camera pose ${}_{{C_i}}^WT$ is:
\begin{equation}
    \begin{array}{l}
    \frac{{\partial {r_{ij}}}}{{\partial {}_{{C_i}}^WT}} = \frac{{\partial {r_{ij}}}}{{\partial {}^{{C_i}}{p_{{f_j}}}}}\frac{{\partial {}^{{C_i}}{p_{{f_j}}}}}{{\partial {}_W^{{C_i}}T}}\frac{{\partial {}_W^{{C_i}}T}}{{\partial {}_{{C_i}}^WT}}\\
    \frac{{\partial {}^{{C_i}}{p_{{f_j}}}}}{{\partial {}_W^{{C_i}}T}} = {\left( {{}_W^{{C_i}}T{}^W{p_{{f_j}}}} \right)^ \odot }\\
    \frac{{\partial {}_W^{{C_i}}T}}{{\partial {}_{{C_i}}^WT}} =  - I
    \end{array}
    \label{eq:J_TWC}
\end{equation}

Where $\odot$ is an operator for the homogeneous coordinate \cite[Sec. 7.1.8]{barfoot2017state}.

In summary, the Jacobians of the pixel measurement residual ${r_{ij}}$ with respect to ${\chi_{{s_1}}}$ can be computed via \cref{eq:J_TWC} and $\frac{{\partial {r_{ij}}}}{{\partial \varsigma }}$.

\subsection{Jacobians of global pose measurement residual}

Next, we analyze the Jacobians involved in the global pose measurement residual ${r_{gi}}$ (\cref{eq:min}). To simplify the description, we define the following intermediate quantities:
\begin{equation}
    \begin{array}{l}
    {}_G^M\hat T \buildrel \Delta \over = {}_G^MT\left( {{t_i} + {t_d}} \right)\\
    {}_C^WT \buildrel \Delta \over = {}_{{C_i}}^WT\\
    {}_{{M_a}}^{{M_b}}\theta  \buildrel \Delta \over = Log\left( {{}_G^{{M_b}}T{}_G^{{M_a}}{T^{ - 1}}} \right)
    \end{array}
\end{equation}

Therefore
\begin{equation}
    \begin{array}{l}
    {r_{gi}} = Log\left( {{}_G^M\hat T{}_W^GT{}_C^WT{}_M^CT} \right)\\
    {}_G^M\hat T = Exp\left( {\lambda {}_{{M_a}}^{{M_b}}\theta } \right){}_G^{{M_a}}T\\
    \lambda  = {{\left( {{t_i} + {t_d} - {t_a}} \right)} \mathord{\left/
     {\vphantom {{\left( {{t_i} + {t_d} - {t_a}} \right)} {\left( {{t_b} - {t_a}} \right)}}} \right.
     \kern-\nulldelimiterspace} {\left( {{t_b} - {t_a}} \right)}}
    \end{array}
\end{equation}

The subset of optimization variables related to ${r_{gi}}$ is noted as:
\begin{equation}
    {\chi_{{s_2}}}  = \left\{ {\begin{array}{*{20}{c}}
    {{}_{{C_i}}^WT}&{{}_W^GT}&{{}_M^CT}&{{t_d}}
    \end{array}} \right\}
\end{equation}

The Jacobian of ${r_{gi}}$ with respect to ${}_M^CT$ is:
\begin{equation}
    \frac{{\partial {r_{gi}}}}{{\partial {}_M^CT}} = J_r^{ - 1}\left( {{r_{gi}}} \right)
    \label{eq:J_TMC}
\end{equation}

Where ${J_r}\left( \bullet \right)$ is the right Jacobian of SE(3) \cite{barfoot2017state}.

The Jacobian of ${r_{gi}}$ with respect to ${}_C^WT$ is:
\begin{equation}
    \frac{{\partial {r_{gi}}}}{{\partial {}_C^WT}} = J_r^{ - 1}\left( {{r_{gi}}} \right)Ad\left( {{}_M^C{T^{ - 1}}} \right)
    \label{eq:J_TCW}
\end{equation}

Where $Ad\left(  \bullet  \right)$ is the adjoint of SE(3) \cite{barfoot2017state}.

The Jacobian of ${r_{gi}}$ with respect to ${}_W^GT$ is:
\begin{equation}
    \frac{{\partial {r_{gi}}}}{{\partial {}_W^GT}} = J_r^{ - 1}\left( {{r_{gi}}} \right)Ad\left( {{{\left( {{}_C^WT{}_M^CT} \right)}^{ - 1}}} \right)
    \label{eq:J_TWG}
\end{equation}

The Jacobian of ${r_{gi}}$ with respect to ${}_G^M\hat T$ is:
\begin{equation}
    \frac{{\partial {r_{gi}}}}{{\partial {}_G^M\hat T}} = J_r^{ - 1}\left( {{r_{gi}}} \right)Ad\left( {{{\left( {{}_G^M\hat T{}_W^GT{}_C^WT{}_M^CT} \right)}^{ - 1}}} \right)
\end{equation}

The Jacobian of ${}_G^M\hat T$ with respect to $\lambda$ is:
\begin{equation}
    \frac{{\partial {}_G^M\hat T}}{{\partial \lambda }} = Ad\left( {Exp\left( {\lambda {}_{{M_a}}^{{M_b}}\theta } \right)} \right){J_r}\left( {\lambda {}_{{M_a}}^{{M_b}}\theta } \right){}_{{M_a}}^{{M_b}}\theta 
\end{equation}

The Jacobian of $\lambda$ with respect to ${t_d}$ is:
\begin{equation}
    \frac{{\partial \lambda }}{{\partial {t_d}}} = \frac{1}{{{t_b} - {t_a}}}
\end{equation}

Finally, through the chain rule, the Jacobian of ${r_{gi}}$ with respect to ${t_d}$ is calculated as:
\begin{equation}
    \frac{{\partial {r_{gi}}}}{{\partial {t_d}}} = \frac{{\partial {r_{gi}}}}{{\partial {}_G^M\hat T}}\frac{{\partial {}_G^M\hat T}}{{\partial \lambda }}\frac{{\partial \lambda }}{{\partial {t_d}}}
    \label{eq:J_td}
\end{equation}

In summary, the Jacobians of the global pose measurement residual ${r_{gi}}$ with respect to ${\chi_{{s_2}}}$ can be computed via \cref{eq:J_TMC}, \cref{eq:J_TCW}, \cref{eq:J_TWG} and \cref{eq:J_td}.

\section{Additional comparison results}
\label{sec:additional_results}

\begin{figure*}
  \centering
  \includegraphics[width=0.9\textwidth, height=0.39\textwidth]{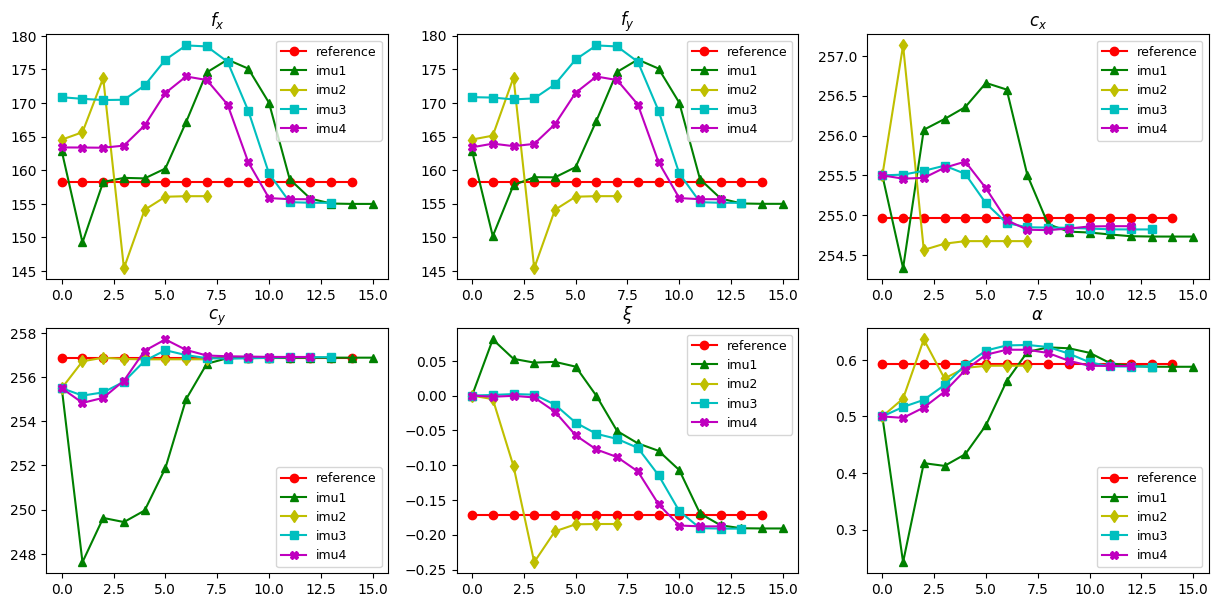}
  \caption{ Iterative process of calibrating left camera intrinsic from scratch. $x$-axis represents iteration steps. }
  \label{intrinsic1}
\end{figure*}

\begin{figure*}
  \centering
  \includegraphics[width=0.9\textwidth, height=0.39\textwidth]{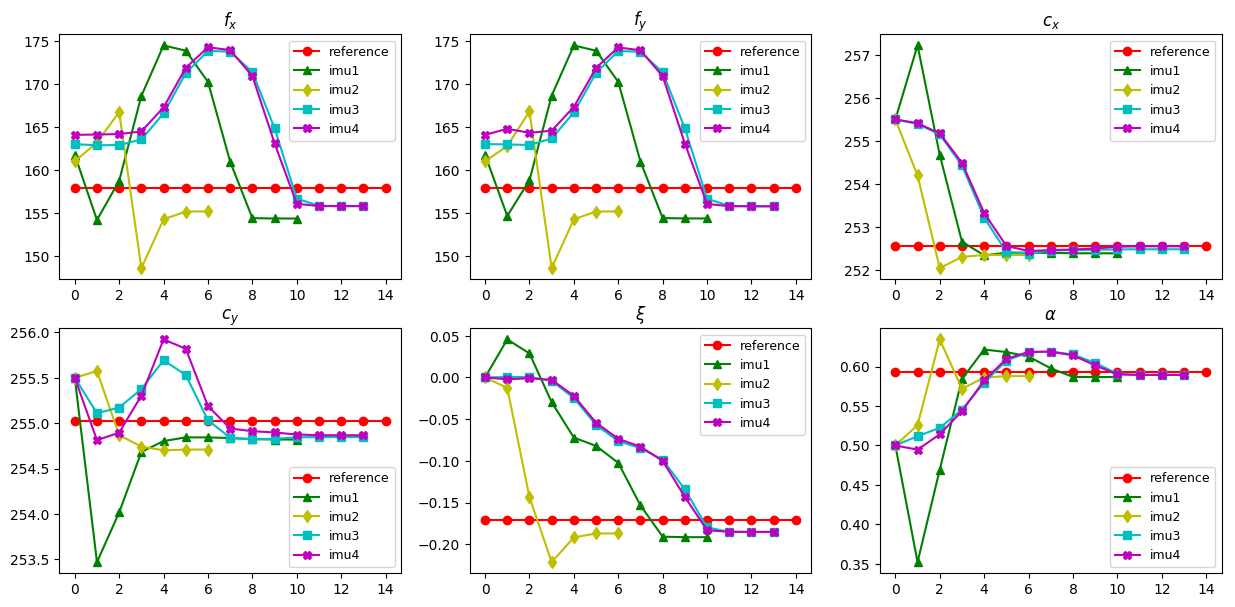}
  \caption{ Iterative process of calibrating right camera intrinsic from scratch. $x$-axis represents iteration steps. }
  \label{intrinsic2}
\end{figure*}

Compared to \cite{furrer2018evaluation}, our proposed target-based  method has another benefit, in addition to iterative optimization of camera trajectory. Prior to perform spatial-temporal hand-eye calibration, \cite{furrer2018evaluation} need to calibrate the camera intrinsic first. While our method does not require this step, as camera intrinsic is added to the optimization variables. This simultaneously calibration feature simplifies the calibration process. Moreover, \cite{furrer2018evaluation} may suffer from the fixed camera intrinsic. Environmental influences and camera motions may lead to unmodelled errors for camera intrinsic. To address this issue, our method finds the optimal camera intrinsic parameters that best fit all available measurements for each sequence.

\cref{intrinsic1} shows the iterative process of calibrating monocular camera intrinsic from scratch with our target-based method. Left camera is used for the selected sequence $\left\{ {imu1 \sim imu4} \right\}$ from TUM-VI Dataset \cite{schubert2018tum}, and double sphere camera model \cite{usenko2018double} is adopted. Regarding the initialization method and reference values for camera intrinsic parameters,
% $\left\{ {\begin{array}{*{20}{c}}
% {{f_x}}&{{f_y}}&{{c_x}}&{{c_y}}&\xi &\alpha 
% \end{array}} \right\}$, 
we refer to \cite{usenko2018double}. 
In \cref{intrinsic1}, all estimated intrinsic parameters converge near the reference values, with slightly difference for each sequence. When using the right camera, the corresponding results are shown in \cref{intrinsic2}. Final average reprojection error and position error in \cref{eq:min} are smaller than 0.1 pixel and 0.1 cm for left and right camera from each sequence. Results from \cref{intrinsic1} and \cref{intrinsic2} demonstrate the ability of calibrating optimal camera intrinsic from scratch for each sequence with the target-based method.

\end{document}